\newtheorem*{remark}{Remark}
\newtheorem{theorem}{Theorem}
\begin{document}
%
\title{Uncertainty Estimation of Dense Optical-Flow for Robust Visual Navigation} 
%
%
%

\author{Yonhon Ng, Hongdong Li and Jonghyuk Kim 
\thanks{Y. Ng and H. Li are with the Research School of Engineering, Australian National University, Acton ACT 2601, Australia e-mail: \{yonhon.ng, jonghyuk.kim, hongdong.li\}@anu.edu.au. J. Kim is with University of Technology Sydney, e-mail: jonghyuk.kim@uts.edu.au.}}

\maketitle

\begin{abstract}
This paper presents a novel dense optical-flow algorithm to solve the monocular simultaneous localization and mapping (SLAM) problem for ground or aerial robots. Dense optical flow can effectively provide the ego-motion of the vehicle while enabling collision avoidance with the potential obstacles. Existing work has not fully utilized the uncertainty of the optical flow -- at most an isotropic Gaussian density model. We estimate the full uncertainty of the optical flow and propose a new eight-point algorithm based on the statistical Mahalanobis distance. Combined with the pose-graph optimization, the proposed method demonstrates enhanced robustness and accuracy for the public autonomous car dataset (KITTI) and aerial monocular dataset. 
\end{abstract}

\begin{IEEEkeywords}
Monocular visual navigation; Dense optical-flow; Uncertainty estimation, Epipolar Constraints
\end{IEEEkeywords}

%
\IEEEpeerreviewmaketitle

\section{Introduction}

Unmanned aerial vehicles (UAVs) have drawn significant attention from the research community and industry in the last few decades. The primary advantage of a UAV is the ability to access places that are hazardous and hard to reach, such as for inspection of infrastructure~\cite{Morgenthal14}\cite{Nikolic13}, precision agriculture~\cite{Herwitz02}\cite{Gago15}, and search and rescue operation~\cite{Bernard11}\cite{Waharte10}. Due to the lightweight, rich information, and low cost, cameras have been extensively applied for robot navigation and environment mapping. In particular, visual simultaneous localization and mapping (SLAM) algorithms have drawn significant interest in both robotics and computer vision communities~\cite{Kummerle11}\cite{Stachniss17}. These algorithms typically consist of a front-end and a back-end part. The front-end part typically performs visual odometry between consecutive images or relative to a known map, and the back-end module performs graph optimization as well as handling the loop-closures. 

\begin{figure}[t]\centering
	{\includegraphics[width=0.9\columnwidth]{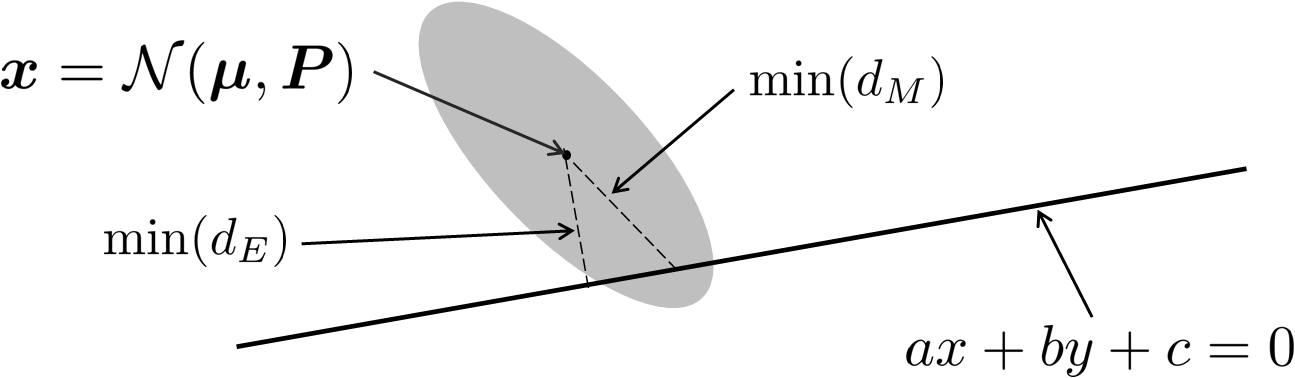}}
	\caption{\label{fig:Mahalanobis_point2line} An illustrative figure showing an image feature pixel $\bm{x}$ represented as a 2-dimensional random variable with mean $\mu$ and covariance matrix $\bm{P}$, the epipolar line is represented as a straight line $l$ with equation $a x + b y + c = 0$, $\min(d_M)$ is the minimum Mahalanobis distance, while $\min(d_E)$ is the minimum Euclidean distance. In standard $8$-point algorithms, the (inverse) Euclidean distance is used as the weight, while our method utilizes the Mahalanobis distance which is statistically more consistent (shorter distance to the line in this example).}
\end{figure}

Most monocular visual odometry methods use sparse feature points matched between images to compute the inter-frame motion \cite{Klein07}\cite{Song16}\cite{Fanani17}. The feature matching accuracy is improved by incorporating the kinematic vehicle models \cite{Bradler15}\cite{Fanani17}. \cite{Song16} adopts the learning-based method, and \cite{Fanani17} utilizes the convolutional neural network to train the ground plane detection to estimate the height for a ground vehicle. These approaches, however, make the methods not suitable for aerial or rough terrain applications. 

Recently, dense optical flow for monocular visual odometry has received significant attention, and the current state-of-the-art methods are capable of producing on average, more than $85$\% of dense correspondences having less than $3$ pixels error~\cite{Xu17}\cite{Hui18}\cite{Sun18}. Although the computed correspondences from optical flow are not very accurate compared to sparse feature matches, the dense nature of the correspondences helps mitigate the inaccuracy of individual matches. Another benefit of using dense optical flow is that it can avoid the shortcomings of typical sparse matches. For example, the sparse matches may be clustered around a small area of the image or encounter problems with the planar degeneracy~\cite{Hartley03}, resulting in a biased motion estimate. However, these methods have not explicitly considered the uncertainty in the feature matching and monocular SLAM pipelines. 

In this paper, we propose a novel \textit{robust} monocular simultaneous localization and mapping in a principled probabilistic framework. This is accomplished by using dense optical flow with estimated uncertainty as to the input to our visual odometry pipeline. Utilizing the existing robust back-end pose-graph SLAM \cite{Cheng15}, the methods will achieve significant robustness to the sensing uncertainty and loop-closure outliers. The contributions of this paper are threefold:
\begin{itemize}
    \item Uncertainty is estimated from the dense optical flow. The epipolar constraint is first included in the matching cost to improve the matching accuracy. The uncertainty is then recovered by fitting a bivariate Gaussian to the matching cost.
    \item A new Mahalanobis eight-point algorithm is developed to compute the visual odometry. The estimated uncertainty enables efficient sampling of RANSAC (random sampling and consensus) and accurate pose estimation using the weights based on the Mahalanobis distance. 
    \item Demonstration of the proposed methods for ground and aerial navigation. 
\end{itemize}

To the best of our knowledge, this is the first work that achieves the robustness for both the front- and back-end SLAM and its application for aerial navigation in an unstructured forest environment.  

The paper is organized as follows. Section \ref{sec:relatedwork} discusses related works on optical flow and visual odometry. Section~\ref{sec:BDFlow} presents our bivariate uncertainty estimation method from the measured optical flow. Section \ref{sec:Weighted8Pt} details our new eight-point algorithm by introducing the Mahalanobis distance from the epipolar line and using it as the weighting factor in the eight-point algorithm. Section \ref{sec:pipeline} describes the visual processing pipeline of the visual SLAM, discussing the scale estimation, inter-frame pose estimation, depth estimation, and loop closure computation. Section \ref{sec:Experiment} shows the experimental results that verify the performance of our proposed method, and Section \ref{sec:Conclusion} presents the conclusions and future extensions of our work. 

\section{Related Work} \label{sec:relatedwork}

Due to the lack of a fixed baseline, visual odometry using a monocular (single camera) system has the inherent difficulty in estimating the metric distance, unlike stereo camera~(\emph{e.g.} \cite{Mur-Artal16}\cite{Engel15}) setup. For the same reason, monocular (single camera) visual odometry also has difficulty with scale drift. Another challenge of visual odometry is in the image feature matching step, especially for scenes with repetitive texture and dynamic objects. 
Work by \cite{Cheviron07} and \cite{Artieda09} also assume the scene contains known objects with known dimensions to assist with the visual odometry task. However, this assumption is not valid when exploring an unknown, unstructured environment.

State-of-the-art monocular visual odometry methods suitable for a forward-looking camera in an unstructured environment are primarily designed for ground vehicle \cite{Geiger11}\cite{Song16}\cite{Fanani17}. These methods prevent scale drift and recover the metric scale by assuming that the camera travels at a fixed, known height above a roughly planar ground. Although this is a valid assumption for a ground-based vehicle, it is not true for an aerial vehicle. 

Dense optical flow methods have been investigated to address the difficulty of feature matching in low or repetitive textured scenes with dynamic objects. Some notable work includes the use of full discrete cost volume to compute the dense optical flow~\cite{Chen16}\cite{Xu17} directly. DCFlow~\cite{Xu17} is one such work in which an efficient semi-global matching is proposed. DCFlow also utilizes EpicFlow~\cite{Revaud15} for interpolation and local homography fitting to obtain accurate optical flow estimation with lower computational cost. Recently, convolutional neural network, pyramidal feature extraction and feature warping have also been applied successfully to compute the optical flow~\cite{Hui18}\cite{Sun18}. However, these methods, like most state-of-the-art optical flow methods, do not calculate and utilize the uncertainty of the optical flow. In comparison, our proposed method directly make use of the full discrete cost volume of DCFlow~\cite{Xu17}, which allows simple incorporation of extra matching cost (epipolar) constraint to improve the optical flow accuracy and direct estimation of the 2D uncertainty. 

The dense optical flow correspondences are treated like sparse feature matches, which are used to estimate the so-called Fundamental matrix. The well-known normalized eight-point algorithm~\cite{Hartley97} can be used to compute the Fundamental matrix. The conventional eight-point algorithm does not minimize a meaningful geometrical error. Thus, other methods that minimize the Sampson distance~\cite{Torr98}, symmetric epipolar distance~\cite{Zhang98} and re-projection error~\cite{Hartley03} were developed. A review paper on other methods to compute the Fundamental matrix is given in~\cite{Armangue03}. However, these methods usually assume isotropic, homogeneous error in the image features location, which is not suitable for optical flow correspondences. Due to the definition of optical flow, the feature location error is zero in the first image. In general, it has an anisotropic and non-homogeneous error in the feature location in the second image. Based on this knowledge, we derive a new method to compute the Fundamental matrix that minimizes the Mahalanobis distance of the image feature in the second image frame. Using a calibrated camera, the inter-frame camera pose can then be recovered from the Fundamental matrix.

\section{Dense Flow with Uncertainty}\label{sec:BDFlow}

A new optical flow method is developed based on DCFlow~\cite{Xu17}. The accuracy of the optical flow estimate is improved by incorporating the epipolar constraint into the cost computation, and the optical flow uncertainty is extracted by fitting a Gaussian function to the cost volume. 

\begin{figure}[t]
\centering
	{\includegraphics[width=\columnwidth]{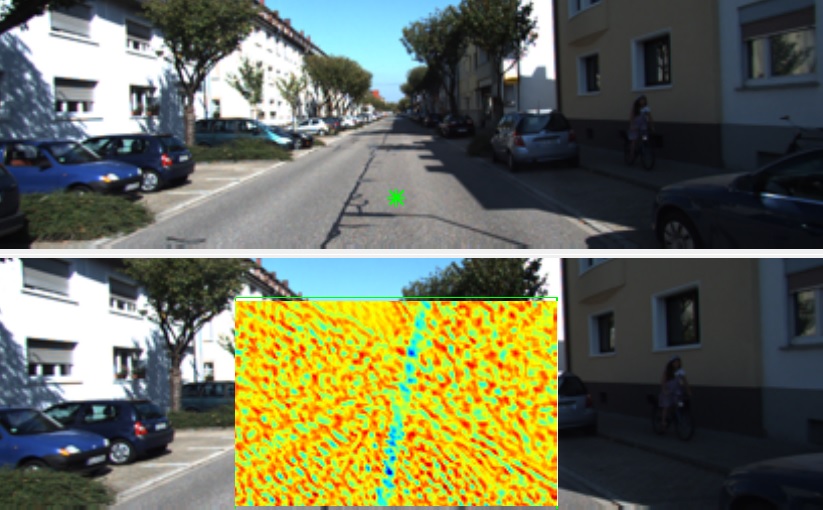}}
\caption{\label{fig:costSlice} An example illustrating the epipolar constraint added to a cost slice from two consecutive images. (top) The first image shows a pixel marked by a green star. (bottom) The second image shows a bounding box enclosing the candidate matching pixels for the corresponding pixel in the first image, with the matching cost of corresponding candidate pixels. Red color represents low matching candidates and blue for high matching. The epipolar line (a straight line towards an epipole) shows highly likely matching.}
\end{figure}

\subsection{Dense Flow with Epipolar Constraint}

Dense optical flow is a method to estimate the motion of each pixel between two input images. The algorithm typically optimizes a cost function consisting of a data cost that penalizes visually dissimilar pixels and a regularisation cost that encourages spatially smooth variation of the optical flow field. For each pixel in the first image of $M \times N$, the matching cost of a set of candidate pixels of $D \times D$ in the second image is computed, such that the full cost volume is four-dimensional (4D).

The epipolar constraint is added to the cost function before regularisation is applied. Figure~\ref{fig:costSlice}(bottom) shows an example of the epipolar constraint added to one of the cost volumes slice, which encourages the correspondences to be close to the epipolar line by increasing the cost of finding a match far from the line. This process is accomplished as follows. First, Shi-Tomasi corner features tracked by Kanade-Lucas-Tomasi (KLT) algorithm \cite{Shi94}, are used as sparse correspondences for the well known eight-point algorithm~\cite{Hartley97} to obtain an initial estimate of the Fundamental matrix. A truncated $L_2$ cost is added to the cost volume to enforce the epipolar constraint based on the computed Fundamental matrix. When the pixel in the first image corresponds to a static point of the scene, the cost of finding the match far away from the epipolar line is increased proportionately to squared distance. Conversely, when a pixel in the first image corresponds to a point on a moving object, a truncated cost is applied. This truncation helps to avoid matches that satisfy the epipolar constraint but are visually dissimilar to be wrongly selected.

\subsection{Uncertainty Estimation}\label{sec:OF_uncertainty}

DCFlow~\cite{Xu17} and most other state-of-the-art optical flow methods implicitly assume each correspondence has a homogeneous, isotropic Gaussian uncertainty. However, the uncertainty of each correspondence can have different magnitude and correlation depending on the visual similarity of neighboring pixels. Figure~\ref{fig:uncertainty} illustrates an example of a matching cost slice, in which the negative logarithm of a unimodal Gaussian distribution is fitted to the optical flow matching cost output.

\begin{figure}[t]
\centering
	{\includegraphics[width=1\columnwidth]{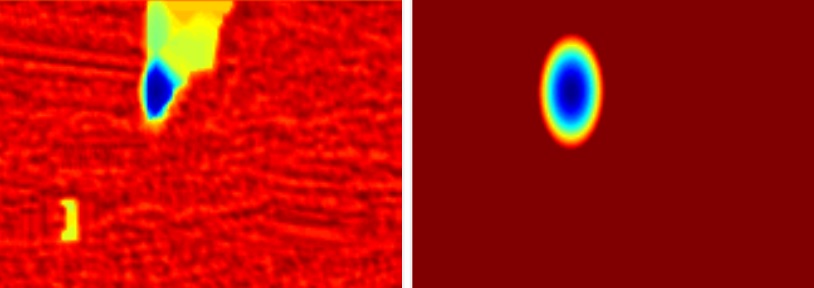}}
	\caption{\label{fig:uncertainty} Uncertainty fitting of negative logarithm of a bivariate Gaussian to a matching cost slice (after spatial smoothness regularisation step). From left to right: (a) 2D cost slice, (b) the approximate $2$D cost slice using a $2$D Gaussian fitting. }
\end{figure}

For a general two-dimensional Gaussian distribution, we know that the negative logarithm of the likelihood function is half of the squared Mahalanobis distance. The squared Mahalanobis distance, $d_M^2$ can be computed as~\cite{Mahalanobis36}, 
\begin{align}
\label{eq:Mahalanobis_dist}
{d_M}^2(\bm{x}| \bm{\mu}, \bm{Y}) &= (\bm{x} - \bm{\mu})^T \bm{Y} (\bm{x} - \bm{\mu}).
\end{align}
We represent the vector $(\bm{x} - \bm{\mu}) = [x, y]^T$, and
\begin{align}
{d_M}^2 &= \begin{bmatrix}
{x} & {y}
\end{bmatrix}
\begin{bmatrix}
{Y}_{xx}   & {Y}_{xy}  \\ 
{Y}_{xy} &  {Y}_{yy} 
\end{bmatrix}
\begin{bmatrix}
{x} \\
{y}
\end{bmatrix} \\
&= {Y}_{xx} {{x}}^2 + 2 \tilde{Y}_{xy} {x} {y} + {Y}_{yy} {{y}}^2,
\end{align}
where $\bm{x}$ is the vector representing the coordinates of a point, $\bm{\mu}$ is the vector representing the coordinates of the mean (optical flow estimate) of the Gaussian distribution, and $\bm{Y}$ is the information matrix of the Gaussian distribution which is the inverse of the covariance $\bm{Y}= \bm{P}^{-1}$. 

The elements of information matrix, $\bm{Y}$ can then be computed using the least squares as
\begin{equation}
\label{eq:iCov_est}
\begin{split}
\underbrace{\begin{bmatrix}
{{x}_1}^2 & 2{x}_1 {y}_1 & {{y}_1}^2 \\
{{x}_2}^2 & 2{x}_2 {y}_2 & {{y}_2}^2 \\
 & \vdots & \\
{{x}_N}^2 & 2{x}_N {y}_N & {{y}_N}^2 
\end{bmatrix} }_{\bm{A}}
\underbrace{\begin{bmatrix}
{Y}_{xx} \\
{Y}_{xy} \\
{Y}_{yy}
\end{bmatrix}}_{\bm{y}} &= \underbrace{\begin{bmatrix}
{d_1}^2 \\
{d_2}^2 \\
\vdots \\
{d_N}^2
\end{bmatrix}}_{\bm{d}} \\
\therefore\bm{y} &= (\bm{A}^T \bm{A})^{-1} \bm{A}^T \bm{d},
\end{split}
\end{equation}
where $\{d_1, d_2, \cdots, d_N\}$ are the matching cost at their respective $x$ and $y$ coordinates from the optical flow cost volume. DCFlow computes the matching cost efficiently by using $(1-f_1 \cdot f_2)$, where $f_1$ and $f_2$ are the unit vectors representing the image feature descriptor. This method results in a cost value between $0$ (visually similar) and $1$ (visually dissimilar). However, the negative logarithm of a Gaussian likelihood function has a value between $0$ to infinity. Thus, we can exclude pixels with high cost from our Gaussian fitting by only using pixels with a matching cost below a set threshold. 

Similar to most top performance optical flow methods, DCFlow has a post-processing step to remove unreliable matches from the semi-dense correspondences before EpicFlow \cite{Revaud15} interpolation. This step is accomplished by computing the forward and backward optical flow and eliminating those matches that do not satisfy the forward-backward consistency. This post-processing step changes the uncertainty estimate such that the correspondences that got removed should be assigned a high uncertainty. We replace those values with the maximum uncertainty of the optical flow estimate. 

These provide us with three channels (${Y}_{xx}, {Y}_{xy}, {Y}_{yy}$), encoding the information matrix for every pixel correspondences for the scaled-down pair of RGB input images. We can scale the uncertainty image back to the original resolution by applying an image resize operation. First, the information matrix parameters are converted to covariance parameters, scaled up to the original image resolution, followed by a multiplication of 9 (squared of image rescaling factor). The scaled-up covariance parameters are then converted back to the information matrix following matrix inverse. 



The estimated uncertainty can also be used to determine if the two input images are visually similar, as illustrated in Fig.~\ref{fig:OF_uncertainty}, which will be helpful in computing the loop closure constraints (discussed in Section~\ref{sec:pipeline}). If the two input images belong to the same scene, most local neighbors will have similar optical flow magnitude and direction. The regularisation step will then shrink the region of possible matching locations, and thus, the uncertainty decreases. On the other hand, local neighbors generally have different optical flow magnitude and directions if the two input images belong to different scenes. In that case, the regularisation step will not shrink the region of possible matching locations, and the uncertainty will become high. 

\begin{figure}[t]
\centering
	{\includegraphics[width=0.49\columnwidth]{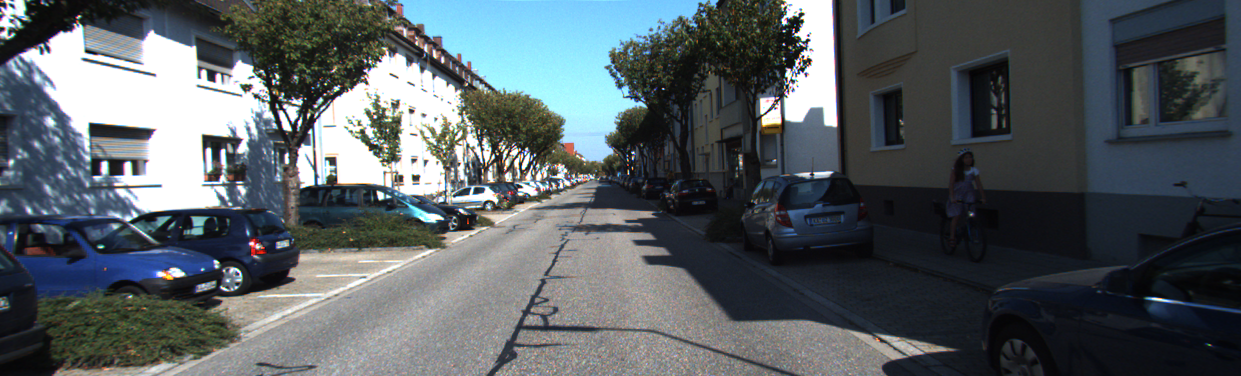}}
	{\includegraphics[width=0.49\columnwidth]{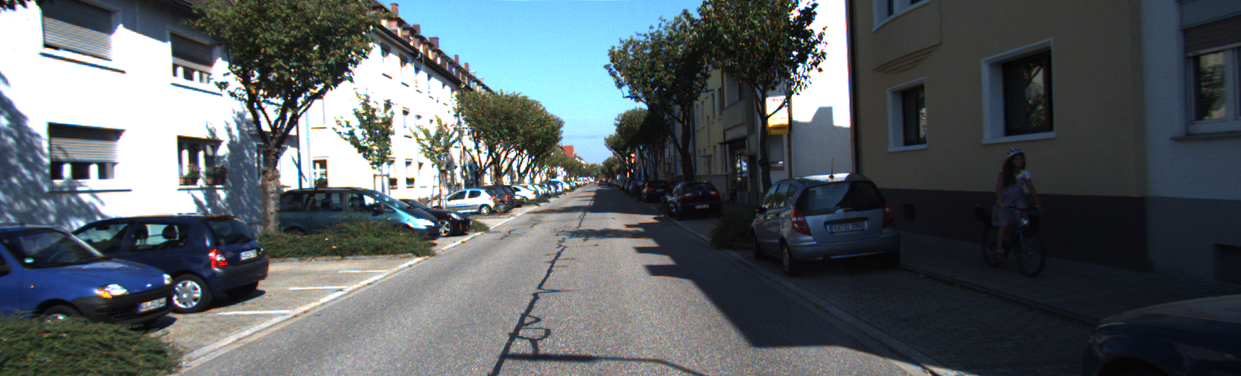}}
	{\includegraphics[width=0.49\columnwidth]{figs/004540.png}}
	{\includegraphics[width=0.49\columnwidth]{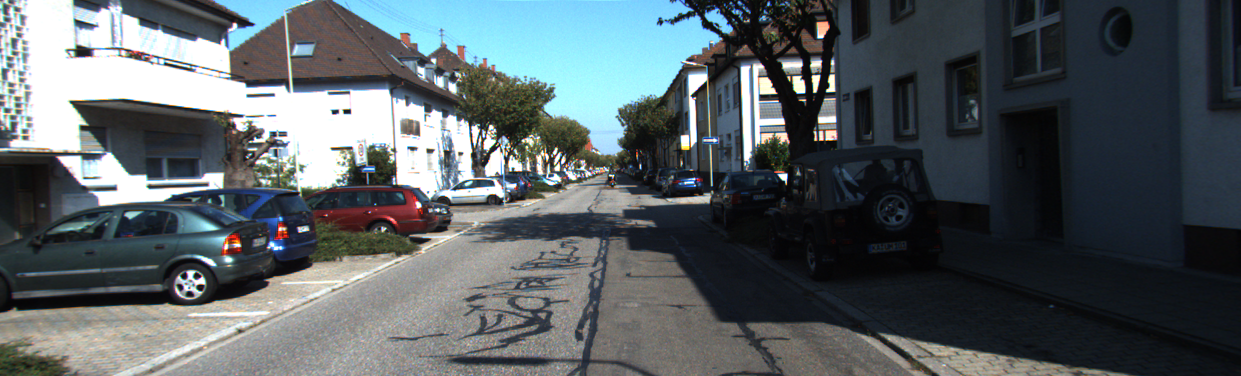}}
	{\includegraphics[width=0.49\columnwidth]{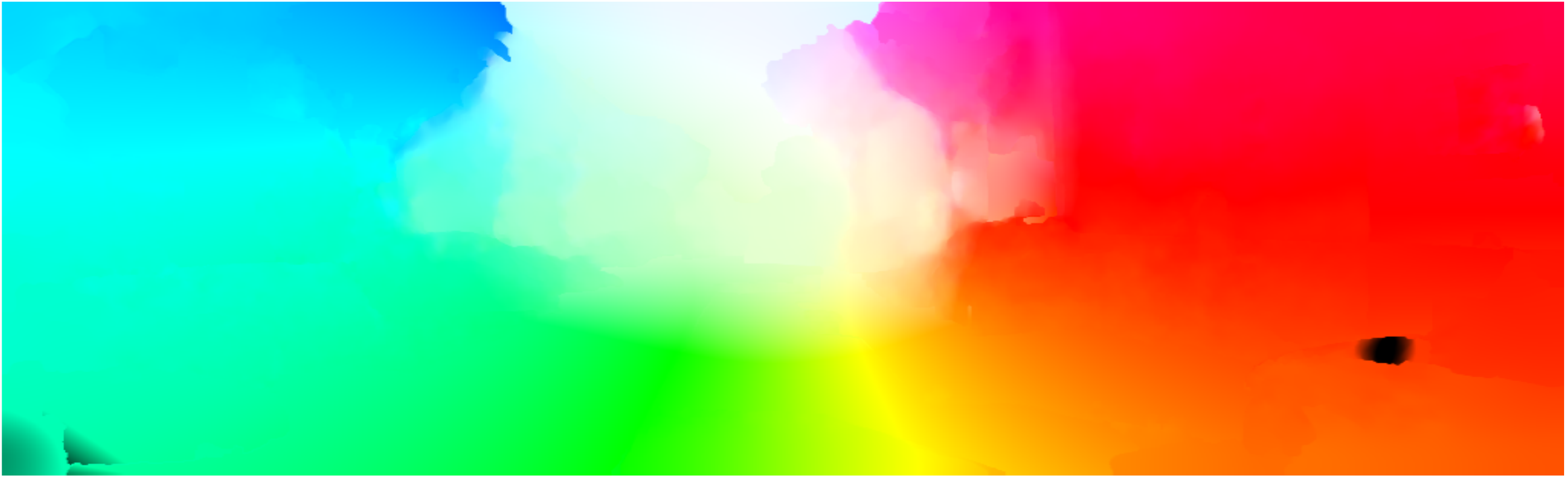}}
	{\includegraphics[width=0.49\columnwidth]{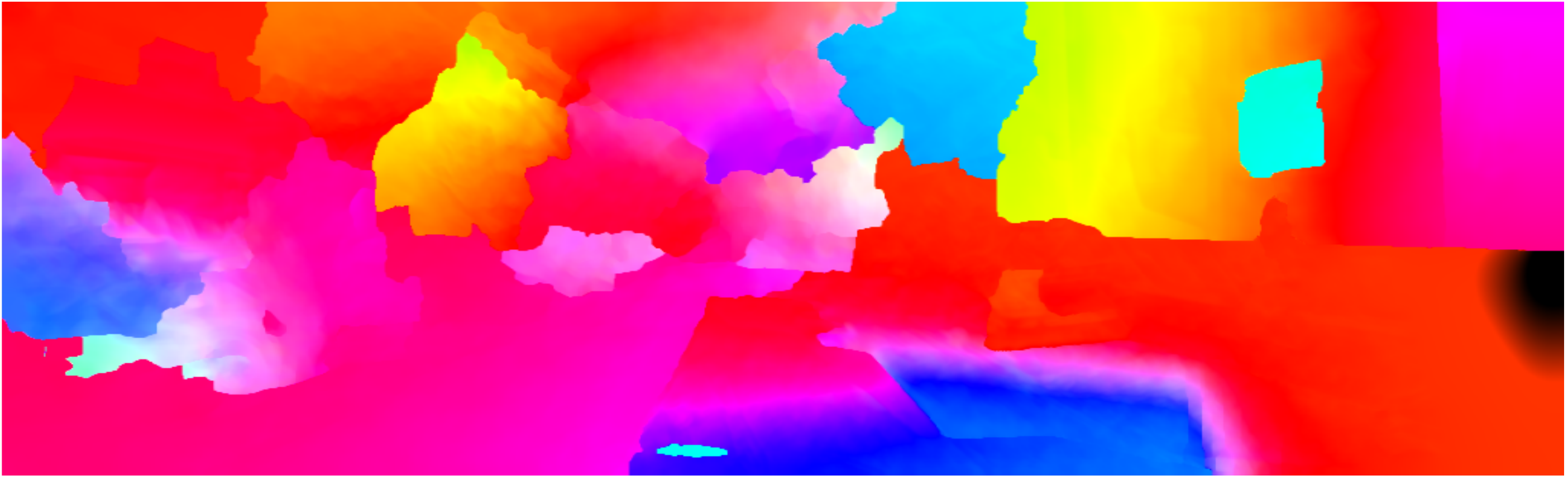}}
	{\includegraphics[width=0.49\columnwidth]{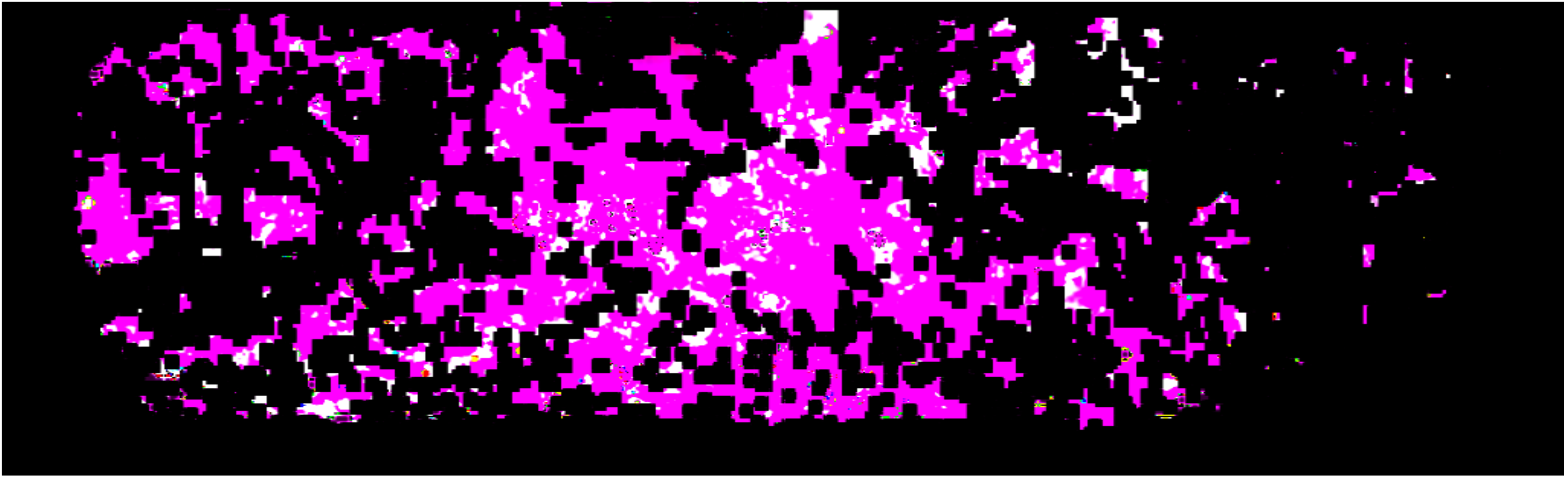}}
	{\includegraphics[width=0.49\columnwidth]{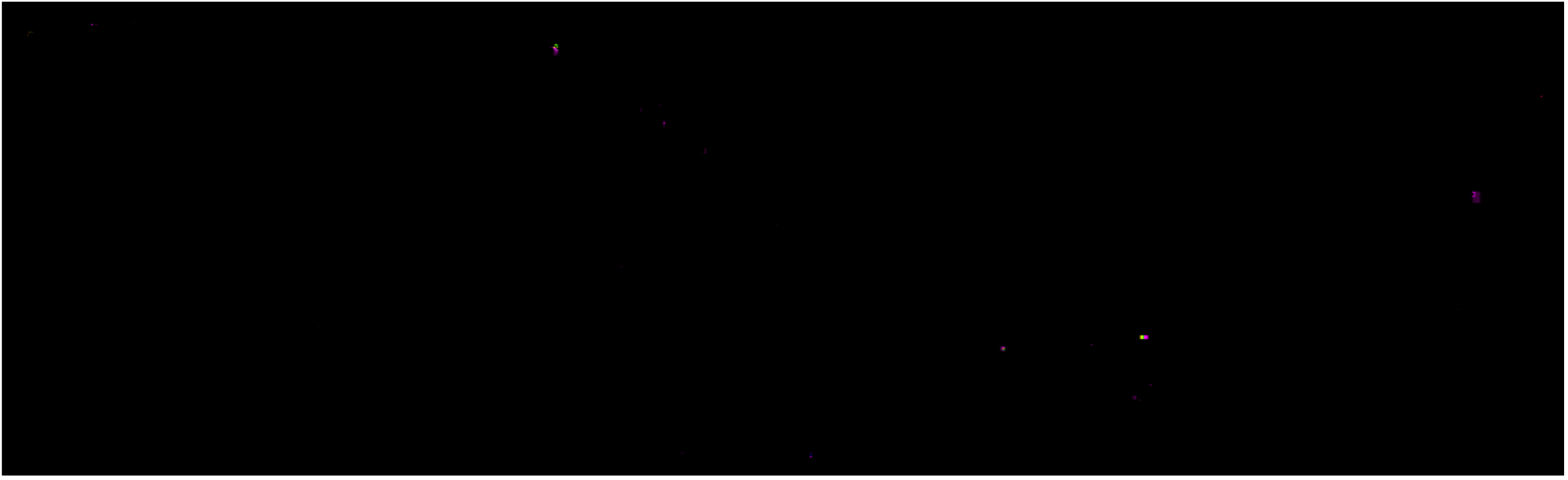}}
	\caption{\label{fig:OF_uncertainty} An example of estimated optical flow and uncertainty magnitude. From top to bottom: first input image, second input image, optical flow, estimated information matrix. Left column corresponds to sequential images, while right column corresponds to two input images with high structural similarity (SSIM) index, but is not of the same scene. Black colour for information matrix values corresponds to high covariance (unreliable) pixels. }
\end{figure}





\section{New Mahalanobis 8-points Algorithm}\label{sec:Weighted8Pt}
The computed dense optical flow, dense depth estimate (prior) and their uncertainty are used to estimate the camera motion, called visual odometry. The dense correspondences are used similar to conventional sparse feature matches. In contrast, the optical flow uncertainty is used for the random sampling and consensus and the Mahalanobis eight-point algorithm. 

From a pair of input images, we can find a set of matching pixels $\bm{x}_i \leftrightarrow {\bm{x}_i}'$. Then, the two pixels and the $3$D feature location stay in a plane called the epipolar plane. Equivalently, a fundamental matrix $\bm{F}$ exists, 
\begin{equation}
{\bm{x}_i'}^T \bm{F} \bm{x}_i = 0,
\end{equation} 
where $\bm{x}_i$ and ${\bm{x}_i}'$ are represented in the homogeneous coordinates. 

Each matching pixel provides a linear constraint on the elements of $\bm{F}$. Since the scale of $\bm{F}$ can be arbitrary, the solution of $\bm{F}$ can be computed using $8$ sets of matching pixels. The Fundamental matrix can be represented by a vector ($\bm{f}$) of length $9$ to solve the equations. Given $n$ pairs of matching image features, the linear constraints can be concatenated into a matrix form as
\begin{align}
\label{eq:C4_fund_linear2}\centering
\underbrace{\begin{bmatrix}
x'_1 x_1 & x'_1 y_1 & x'_1 & y'_1 x_1 & y'_1 y_1 & y'_1 & x_1 & y_1 & 1 \\
\vdots & \vdots & \vdots & \vdots & \vdots & \vdots & \vdots & \vdots & \vdots \\
x'_n x_n & x'_n y_n & x'_n & y'_n x_n & y'_n y_n & y'_n & x_n & y_n & 1
\end{bmatrix}}_{\bm{A}} \bm{f} = \bm{0}.
\end{align}

The solution is then computed by finding the null space of matrix $\bm{A}$. When more than eight noisy matching pixels are provided as input, RANSAC is applied to identify reliable matches (inliers) to compute $\bm{F}$. Given the inlier set, the solution of $\bm{f}$ is then refined by calculating the corresponding right singular vector of $\bm{A}$ with the smallest singular value. This method is the well-known eight-point algorithm, where sparse feature matches are typically used. 

However, solving the null space of equation (\ref{eq:C4_fund_linear2}) only minimises the algebraic error $||{\bm{x}'}^T \bm{F} \bm{x}||$, which does not guarantees the minimisation of a meaningful geometrical distance, nor reflect any weighting of the inliers. One well-known method minimizes the Sampson distance~\cite{Torr98}\cite{Zhang98}, which modifies the rows of matrix $\bm{A}$ by a multiplicative scaling, such that
\begin{align}
\label{eq:fund_linear_weighted}
\begin{bmatrix}
\phi_1 x'_1 x_1 & \phi_1 x'_1 y_1 & \phi_1 x'_1 &  \cdots & \phi_1 \\
\vdots & \vdots & \vdots & \ddots  & \vdots \\
\phi_n x'_n x_n & \phi_n x'_n y_n & \phi_n x'_n & \cdots & \phi_n 
\end{bmatrix} \bm{f} &= \bm{0}, 
\end{align}
with
\begin{equation}
\label{eq:C4_Sampson_weight}
\phi_i = \frac{1}{\sqrt{({\bm{F}} \bm{x}_i)_1^2 + ({\bm{F}} \bm{x}_i)_2^2 + ({\bm{F}}^T \bm{x}'_i)_1^2 + ({\bm{F}}^T \bm{x}'_i)_2^2}},
\end{equation} 
where ${\bm{F}}$ is the iteratively refined Fundamental matrix that is first initialised by computing the null space of $\bm{A}$ from (\ref{eq:C4_fund_linear2}). The rank $2$ constraint is also enforced on the solution to obtain the final estimate of $\bm{F}$. 

In this work, we propose to use the uncertainty of the optical flow to accurately weigh each equation (row of matrix $\bm{A}$) during the refinement step of the Mahalanobis eight-point algorithm. It ensures that the Fundamental matrix solution minimizes the squared Mahalanobis distance to all the inlier correspondences with respect to the uncertainty. This process was illustrated in Fig.~\ref{fig:Mahalanobis_point2line} in the introduction. The weight $\{\phi\}$ can be computed from the minimum Mahalanobis distance in the figure.

\begin{theorem}
For the extimated information matrix, the new multiplicative scaling to each row in (\ref{eq:fund_linear_weighted}) becomes 
\begin{equation}
\label{eq:mahalEpi}
\phi_i = \sqrt{\frac{{Y}_{xx} {Y}_{yy} - {{Y}_{xy}}^2}{{({\bm{F}} \bm{x_i})_1}^2 {Y}_{yy} + {({\bm{F}} \bm{x_i})_2}^2 {Y}_{xx} - 2 {({\bm{F}} \bm{x_i})_1} {({\bm{F}} \bm{x_i})_2} {Y}_{xy}}},
\end{equation}
where the notation $(\bm{v})_k$ is the $k^{th}$ element of the vector $\bm{v}$. 
\end{theorem}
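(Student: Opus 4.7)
The plan is to interpret each row of (\ref{eq:C4_fund_linear2}) as the \emph{signed algebraic residual} $r_i = {\bm{x}_i'}^{T}\bm{F}\bm{x}_i$, and then show that the correct multiplicative scaling $\phi_i$ is precisely the factor that converts $|r_i|$ into the minimum Mahalanobis distance from the measured point $\bm{x}_i'$ to the epipolar line $\bm{l}_i' = \bm{F}\bm{x}_i$ in the second image. With that identification, stacking the rescaled rows and minimising $\|\bm{A}\bm{f}\|^2$ becomes equivalent to minimising $\sum_i d_M^2(\bm{x}_i',\bm{l}_i')$, which is the stated geometric criterion.

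The first step is to fix notation. Writing $\bm{l}_i' = \bm{F}\bm{x}_i = [a,b,c]^{T}$ with $a=(\bm{F}\bm{x}_i)_1$, $b=(\bm{F}\bm{x}_i)_2$, one immediately has $r_i = a x_i' + b y_i' + c$. For the random variable $\bm{x}_i'$ with mean $\bm{\mu}=\bm{x}_i'$ and information matrix $\bm{Y}$ estimated in Section~\ref{sec:OF_uncertainty}, the minimum Mahalanobis distance to the line is obtained by solving
\begin{equation*}
\min_{\bm{\delta}\in\mathbb{R}^{2}} \bm{\delta}^{T}\bm{Y}\bm{\delta}\quad\text{subject to}\quad \bm{n}^{T}\bm{\delta} = -r_i,
\end{equation*}
where $\bm{n}=[a,b]^{T}$ and $\bm{\delta} = \bm{z} - \bm{\mu}$ is the displacement from the measured point to the closest point $\bm{z}$ on the line. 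Introducing a Lagrange multiplier $\lambda$, the stationarity condition gives $\bm{\delta} = \tfrac{\lambda}{2}\bm{P}\bm{n}$ with $\bm{P}=\bm{Y}^{-1}$; substituting back into the constraint fixes $\tfrac{\lambda}{2} = -r_i/(\bm{n}^{T}\bm{P}\bm{n})$.

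Plugging this $\bm{\delta}$ into the objective and using $\bm{Y}\bm{P}=\bm{I}$ collapses the expression to
\begin{equation*}
d_M^2(\bm{x}_i',\bm{l}_i') = \frac{r_i^{2}}{\bm{n}^{T}\bm{P}\bm{n}}.
\end{equation*}
The last step is a direct $2\times 2$ inversion: $\bm{P} = (\det\bm{Y})^{-1}\bigl[\begin{smallmatrix} Y_{yy} & -Y_{xy} \\ -Y_{xy} & Y_{xx}\end{smallmatrix}\bigr]$, so
\begin{equation*}
\bm{n}^{T}\bm{P}\bm{n} = \frac{a^{2}Y_{yy} - 2ab\,Y_{xy} + b^{2}Y_{xx}}{Y_{xx}Y_{yy}-Y_{xy}^{2}}.
\end{equation*}
Taking a square root and rearranging yields $d_M = \phi_i |r_i|$ with $\phi_i$ exactly the expression in (\ref{eq:mahalEpi}). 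Replacing $r_i$ by $\phi_i r_i$ in each row of (\ref{eq:C4_fund_linear2}) therefore turns $\|\bm{A}\bm{f}\|^2$ into $\sum_i d_M^2$, proving the theorem.

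The only non-routine point is ensuring that the Lagrange-multiplier minimisation is really what underpins the standard ``multiplicative scaling'' trick used for the Sampson weight in (\ref{eq:C4_Sampson_weight}). Once the algebraic residual is identified row-by-row with $\bm{x}_i'^{T}\bm{F}\bm{x}_i$, everything else is a closed-form $2\times 2$ calculation; the main obstacle is simply keeping the signed residual and the norm $\bm{n}^{T}\bm{P}\bm{n}$ consistent so that $|r_i|\,\phi_i$ equals the geometric Mahalanobis distance rather than a scaled variant of it. As in the isotropic Sampson case, $\bm{F}$ appearing inside $\phi_i$ makes the scheme inherently iterative: one initialises $\phi_i$ with the current $\bm{F}$, re-solves the weighted system, enforces the rank-$2$ constraint, and repeats until convergence.
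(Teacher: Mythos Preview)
Your proof is correct and reaches the same formula, but the route differs from the paper's. The paper parametrises the epipolar line explicitly by its $x$-coordinate, writing a generic point as $[x_1,\,-(ax_1+c)/b]^T$, expands the squared Mahalanobis distance (\ref{eq:SquaredMahalanobis}) as a quadratic in $x_1$, sets its derivative to zero, and back-substitutes to obtain (\ref{eq:minMahalanobis}); the identification of $|ax_0+by_0+c|$ with the algebraic residual then yields $\phi_i$. You instead cast the problem as the equality-constrained quadratic programme $\min_{\bm{\delta}}\bm{\delta}^T\bm{Y}\bm{\delta}$ subject to $\bm{n}^T\bm{\delta}=-r_i$ and solve it with a Lagrange multiplier, which immediately gives the compact closed form $d_M^2=r_i^2/(\bm{n}^T\bm{P}\bm{n})$ before any $2\times 2$ algebra. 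Your derivation is coordinate-free, avoids the implicit assumption $b\neq 0$ in the paper's parametrisation, and makes the analogy with the Euclidean point--line distance $r_i^2/(a^2+b^2)$ (and hence with the Sampson weight) transparent; the paper's version, on the other hand, needs only single-variable calculus and no constrained-optimisation machinery. Either way the final scaling $\phi_i$ and the iterative-reweighting interpretation are identical.
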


\begin{proof}
Given an initial Fundamental matrix estimate $\bm{F}$, homogeneous coordinates of matching pixels in both images $\bm{x}_i$ and $\bm{x}_i'$. We assume the error is only present in the second image's features $\bm{x}_i'$. From Fig.~\ref{fig:Mahalanobis_point2line}, let mean $\bm{\mu} = [x_0, y_0]^T$, information matrix $\bm{Y} = \bm{P}^{-1} = \begin{bmatrix}
{Y}_{xx}   & {Y}_{xy}  \\ 
{Y}_{xy} &  {Y}_{yy} 
\end{bmatrix}$, and a point on the line be $[x_1, y_1]^T = [x_1, \frac{-a x_1 - c}{b}]^T$. 

First, we calculate the minimum Mahalanobis distance between the line $\bm{l}$ and the mean image feature location $\bm{\mu}$. The minimum Mahalanobis distance equals the square root of the minimum squared Mahalanobis distance. The squared Mahalanobis distance ${d_M}^2$ between the feature pixel, and the epipolar line is computed as follows. 

\begin{equation}
\label{eq:SquaredMahalanobis}
{d_M}^2 =\begin{bmatrix}
x_1 - x_0 \\
\frac{-a x_1 - c}{b} - y_0
\end{bmatrix}^T \begin{bmatrix}
{Y}_{xx}   & {Y}_{xy}  \\ 
{Y}_{xy} &  {Y}_{yy} 
\end{bmatrix} \begin{bmatrix}
x_1 - x_0 \\
\frac{-a x_1 - c}{b} - y_0
\end{bmatrix}
\end{equation}
Expanding (\ref{eq:SquaredMahalanobis}) and computing the first derivative of ${d_M}^2$ with respect to $x_1$ equals to zero provides us the solution of $x_1$ where ${d_M}^2$ is minimum. We then substitute this solution of $x_1$ back into (\ref{eq:SquaredMahalanobis}) and apply a square root to obtain the equation of the minimum Mahalanobis distance, $\min(d_M)$ as follows. 

\begin{equation}
\label{eq:minMahalanobis}
\min(d_M) = |a x_0 + b y_0 + c| \sqrt{\frac{{Y}_{xx} {Y}_{yy} - {{Y}_{xy}}^2}{a^2 {Y}_{yy} + b^2 {Y}_{xx} - 2 a b {Y}_{xy}}}
\end{equation}

Since the original eight point algorithm minimises $|a x_0 + b y_0 + c|$, the multiplicative scaling is thus
\begin{equation}
\phi = \sqrt{\frac{{Y}_{xx} {Y}_{yy} - {{Y}_{xy}}^2}{a^2 {Y}_{yy} + b^2 {Y}_{xx} - 2 a b {Y}_{xy}}}.
\end{equation}
This completes the proof. 
\end{proof}

Similar to the Sampson distance method, ${\bm{F}}$ is then iteratively refined that is first initialized by computing the null space of $\bm{A}$ from (\ref{eq:C4_fund_linear2}). In the modified RANSAC step, the uncertainty of the optical flow is used to guide the sampling of the matches by increasing the likelihood of selecting correspondences (or matches) with lower uncertainty. The multinomial resampling method is used as in the particle filtering~\cite{Douc05}, which helps in decreasing the required number of iterations to ensure good inlier set selection. 

From the estimated Fundamental matrix $\bm{F}$ and the intrinsic camera matrix $\bm{K}$, the essential matrix $\bm{E}$ can be recovered as $\bm{E} = \bm{K}'^T \bm{F} \bm{K}$, and the camera pose $(\bm{t},\bm{R})$ can be obtained from the well known factorization method \cite{Hartley03}. 

\section{Visual Processing Pipeline} \label{sec:pipeline}

Unlike stereo odometry, the estimated translation from the monocular camera faces problems with the scale ambiguity and the scale drift over multiple images.

Two methods are proposed to minimize the scale drift: fitting the ground plane and using the computed depth map. First, the scale is determined by fitting a plane through the $3$D reconstructed points roughly parallel to the camera axis's $zx$ (forward-right) plane. Assuming the ground is visible roughly in the middle of the image, we use the reconstructed points below the camera ($y$ coordinate of the $3$D points is positive), and not too far to the side (image coordinate $x$ within half the image width from the center) of the camera. Plane fitting provides us with a plane equation satisfying $ax+by+cz+d=0$. The plane's height with respect to $1$ unit of inter-frame translation then equals $-d/b$. If the camera's height, $h$ is known (calibrated from training data, or estimated throughout the motion), the scale of the inter-frame translation $s$ can be computed as $s = -(b h)/d$. Second, the translational scale is also computed by relating the current and previous depth maps. The median of the multiplicative factor between corresponding depth values provides a robust translational scale estimate. 

We can combine the scales estimated from the ground height and depth map for ground-based vehicles using a simple average. For aerial vehicles, we cannot ensure that the height from the ground is fixed, and thus, we estimate the scale using ground height for the first frame and relies on the scale from the depth map for subsequent frames. The camera's height is continually updated using the reconstructed 3D points of the scene, which is only used to reinitialize the translational scale when not enough ($<5\%$) $3$D points from the previous estimate overlaps the current triangulated points. The estimated scale is then multiplied to the estimated camera translation, 3D reconstructed points, depth map, and uncertainty. The information matrix of the reconstructed 3D points is divided by the squared scale. 

Given the dense optical flow, there are two methods to estimate the inter-frame poses depending on the magnitude of the pixel motion (or parallax). If there exists enough parallax, the Mahalanobis eight-point algorithm (Section~\ref{sec:Weighted8Pt}) along with the estimated scale can be applied. If the motion is too small (e.g., hovering), the PnP method~\cite{Gao03} becomes more effective to estimate the motion of the camera. The poses computed from two methods are averaged to improve the robustness of the solution. When the depth values have not converged accurately, the PnP estimate may return an error-prone result. Thus, we only perform the fusion when the difference in the estimated translation scale is within $30\%$ of the scale estimated in Section~\ref{sec:scale}, and the estimated rotations have a difference less than $0.5$ radians. If either of these conditions is not met, we use the pose calculated from the Mahalanobis eight-point algorithm (Section~\ref{sec:Weighted8Pt}) instead. The depth maps are fused using the pixels that are matched and triangulated. The fused depth map and reconstructed scene points for the previous frame are propagated to the current image frame using the computed pose. This map provides prior $3$D scene information for the next image frame.

The use of scaled-down images (one-third of the original scale) for dense optical flow estimation cannot guarantee the accuracy of the matches when the pixel translation between two images is too small. This case occurs when the vehicle moves very slowly or hovers, causing the motion estimation to be error-prone. Small translational motion estimation is a common problem in most monocular visual odometry, as the small parallax between two images leads to difficulty in estimating both motion and structure accurately. Two conditions determine the sufficient parallax. First, the Shi-Tomasi corner matches have a median displacement magnitude of at least $2.5$ pixels. Secondly, the third quantile ($75$\%) of the computed optical flow is larger than $5$ pixels. Suppose either of the two conditions is not met. In that case, the inter-frame motion is computed only from the PnP method using the previously calculated depth and the motion of the corresponding pixels. 

Loop closure (or revisiting the previous locations) detection is accomplished by using the robust linear pose-graph optimization method from \cite{Cheng15}. Similar to other pose-graph SLAM, this method treats all poses of the vehicle or robot as vertices and the inter-pose constraints (\emph{e.g.} odometry and close loop constraints) as edges. The candidate frames for loop closure are found in three steps. The first step is by selecting frames with their estimated poses to be within a fixed distance while having a difference in frame index no less than a threshold value. The minimum frame index difference is enforced to prevent finding too many candidates within adjacent frames. We can further reduce the possible candidates by only finding candidate loop closure images for every 10 frames. The second step is to determine which of the candidate loop closure images are valid by using the structural similarity index (SSIM)~\cite{Wang04}. Any images are discarded that have an SSIM index less than a set threshold (experimentally set to $0.38$), and keep a maximum of three candidate images with the highest SSIM value. Lastly, the dense optical flow between the images and their possible neighbors are computed. The estimated uncertainty is used to determine if the optical flow is reliable and only calculate their inter-frame motion when the percentage of matches with an uncertainty less than a set threshold is higher than $20$\% (an example is shown in Fig.~\ref{fig:OF_uncertainty}). During the loop closure, the motion estimation step also checks for the small-motion conditions discussed in the previous section, which enables loop-closures between temporally separated poses but spatially close to each other. 

\section{Experimental Results} \label{sec:Experiment}

We evaluated our proposed SLAM framework using the public KITTI dataset \cite{Geiger12} and our UAV dataset. KITTI dataset shows a camera mounted on a vehicle travelling on the roughly planar ground. Sequence 01, in particular, is a challenging highway scenario, where the car is moving at high speed, and there are few unique feature points within view. UAV dataset shows a camera mounted on a quadcopter flying in a highly unstructured outdoor environment with dynamically moving objects. The UAV also performs motions such as (almost) pure rotation and extreme height variation. These make an accurate estimation of camera pose difficult for existing monocular visual odometry and SLAM. 

\subsection{Ground Vehicle}
The dataset we used to verify the performance of our proposed algorithm for ground-based vehicles is taken from the KITTI benchmark. For optical flow evaluation, we use the flow 2015 dataset~\cite{Menze15CVPR}, while for the odometry, we use the odometry dataset~\cite{Geiger12}. For both experiments, we use the monocular RGB images (\emph{image\_2} folder). In the odometry experiment, we assume the camera is $1.7 m$ above the ground, with zero pitch. Due to the post-processing part of the DCFlow code not being made available, we can only verify the optical flow result before homography fitting is applied to the EpicFlow~\cite{Revaud15} interpolated results. Based on KITTI 2015 optical flow dataset, by utilizing our epipolar constraint on the cost volume, we achieved a $0.6\%$ improvement in accuracy (regarding less than 3 pixels endpoint error criterion). The improvement is small because the epipolar truncation cost is set very low to accommodate for dynamic pixels in the scene. However, we can visually observe a noticeable improvement in the optical flow estimation for the ground pixels, not reflected by the significant ($3$ pixels error) KITTI accuracy metric. We also implemented a homography fitting step based on the description of their paper. 

The uncertainty estimate for the dense optical flow is visually inspected, where it was observed that occluded, out-of-bound or textureless regions of the image have high uncertainty value. For ground-based vehicle's visual odometry result, we compare our performance with existing methods, specifically VISO2-M~\cite{Geiger11}, MLM-SFM~\cite{Song16}, PMO~\cite{Fanani17} and DOF-1DU+LC~\cite{Ng17}. We selected a few of the available sequences that contain a slow-moving vehicle in an urban environment (sequence 00), a fast-moving vehicle on a highway (sequence 01) and a vehicle travelling in a loop (sequence 06) to gauge the performance of our proposed methods. The results are summarised in Table~\ref{tab:odomRes_ablation} and Table~\ref{tab:odomRes}. Note that VISO2-M~\cite{Geiger11} and MLM-SFM~\cite{Song16} methods fail to estimate the visual odometry for sequence 01 due to the highly repeated structures of the scene, which cannot be reliably matched by the sparse feature matching technique their methods employ. Fig.~\ref{fig:estimated_pose} shows our estimated trajectory for the vehicle's motion. From the estimated motion trajectory (Fig.~\ref{fig:estimated_pose}) and computed error from the ground truth (Table~\ref{tab:odomRes}), we can observe that our proposed method achieved a very accurate estimation of translation. This result is achieved without using bundle adjustment, motion model, or ground segmentation used by other state-of-the-art techniques. From Table~\ref{tab:odomRes_ablation}, we can also observe an improvement in the rotation estimate after fusing the Mahalanobis eight-point algorithm and PnP result.

\begin{table*}[th]
\centering
\begin{tabular}{|c|cc|cc|cc|}
\hline
 & \multicolumn{2}{|c|}{DOF-2DU} & \multicolumn{2}{|c|}{DOF-2DU+PnP} & \multicolumn{2}{|c|}{DOF-2DU+PnP+LC} \\
 \cline{2-7}
seq & rot & trans & rot & trans & rot & trans \\
 & (deg/m) & (\%) & (deg/m) & (\%) & (deg/m) & (\%) \\
\hline
00 & 0.0076 & 1.80 & 0.0067 & 1.57 & 0.0045 & 1.07 \\
\hline
01 & 0.0082 & 0.97 & 0.0050 & 1.03 & 0.0050 & 1.03 \\
\hline
06 & 0.0047 & 0.96 & 0.0039 & 1.11 & 0.0039 & 1.17 \\
\hline
\end{tabular}
\vspace{0.1cm}
\caption{Ablation study of our new proposed methods for selected KITTI dataset. ``DOF-2DU'' is the pose estimate of our Mahalanobis eight-point algorithm using dense optical flow with 2-dimensional uncertainty, ``+PnP'' is the fused pose estimate with perspective-n-point, and ``+LC'' is the inclusion of loop closure. }
\label{tab:odomRes_ablation}
\end{table*}

\begin{table*}[th]
\centering
\begin{tabular}{|c|cc|cc|cc|cc|cc|}
\hline
 & \multicolumn{2}{|c|}{VISO2-M} & \multicolumn{2}{|c|}{MLM-SFM} & \multicolumn{2}{|c|}{PMO} & \multicolumn{2}{|c|}{DOF-1DU+LC} & \multicolumn{2}{|c|}{DOF-2DU+PnP+LC} \\
 \cline{2-11}
seq & rot & trans & rot & trans & rot & trans & rot & trans & rot & trans \\
 & (deg/m) & (\%) & (deg/m) & (\%) & (deg/m) & (\%) & (deg/m) & (\%) & (deg/m) & (\%) \\
\hline
00 & 0.0209 & 11.91 & 0.0048 & 2.04 & 0.0042 & 1.09 & 0.0117 & 2.03 & 0.0045 & 1.07 \\
\hline
01 & n/a & n/a & n/a & n/a & 0.0038 & 1.32 & 0.0107 & 1.149 & 0.0050 & 1.03 \\
\hline
06 & 0.0157 & 4.74 & 0.0081 & 2.09 & 0.0044 & 1.31 & 0.0054 & 1.05 & 0.0039 & 1.17 \\
\hline
\end{tabular}
\vspace{0.1cm}
\caption{Comparison of visual odometry accuracy for VISO2-M \protect\cite{Geiger11}, MLM-SFM \protect\cite{Song16}, PMO \protect\cite{Fanani17}, dense optical flow with 1D uncertainty and loop closure (DOF-1DU+LC)~\protect\cite{Ng17} and our new proposed methods (DOF-2DU+PnP+LC) for selected KITTI dataset. }
\label{tab:odomRes}
\end{table*}

\begin{figure*}[th]
\centering
	{\includegraphics[width=0.32\textwidth]{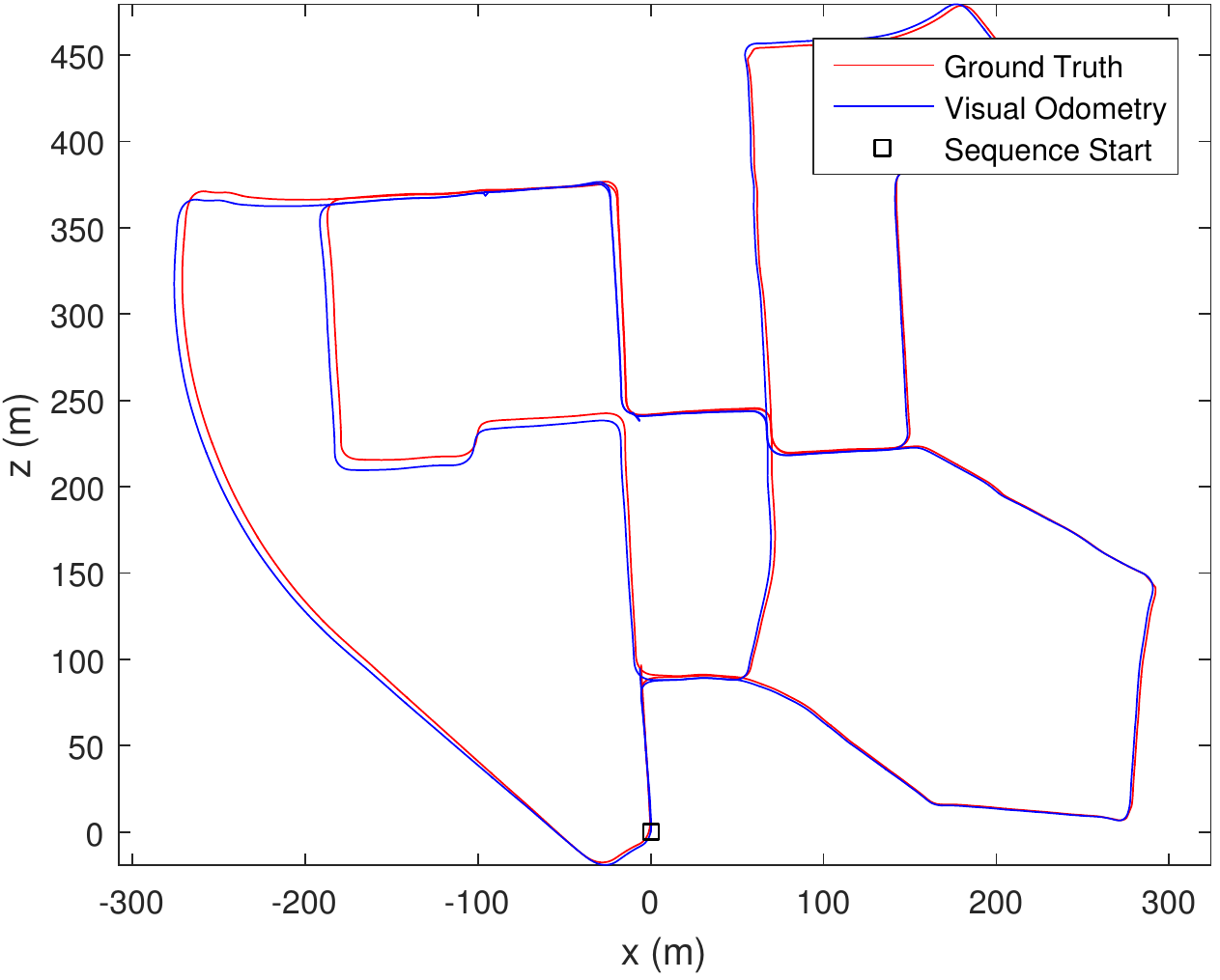}}
	{\includegraphics[width=0.32\textwidth]{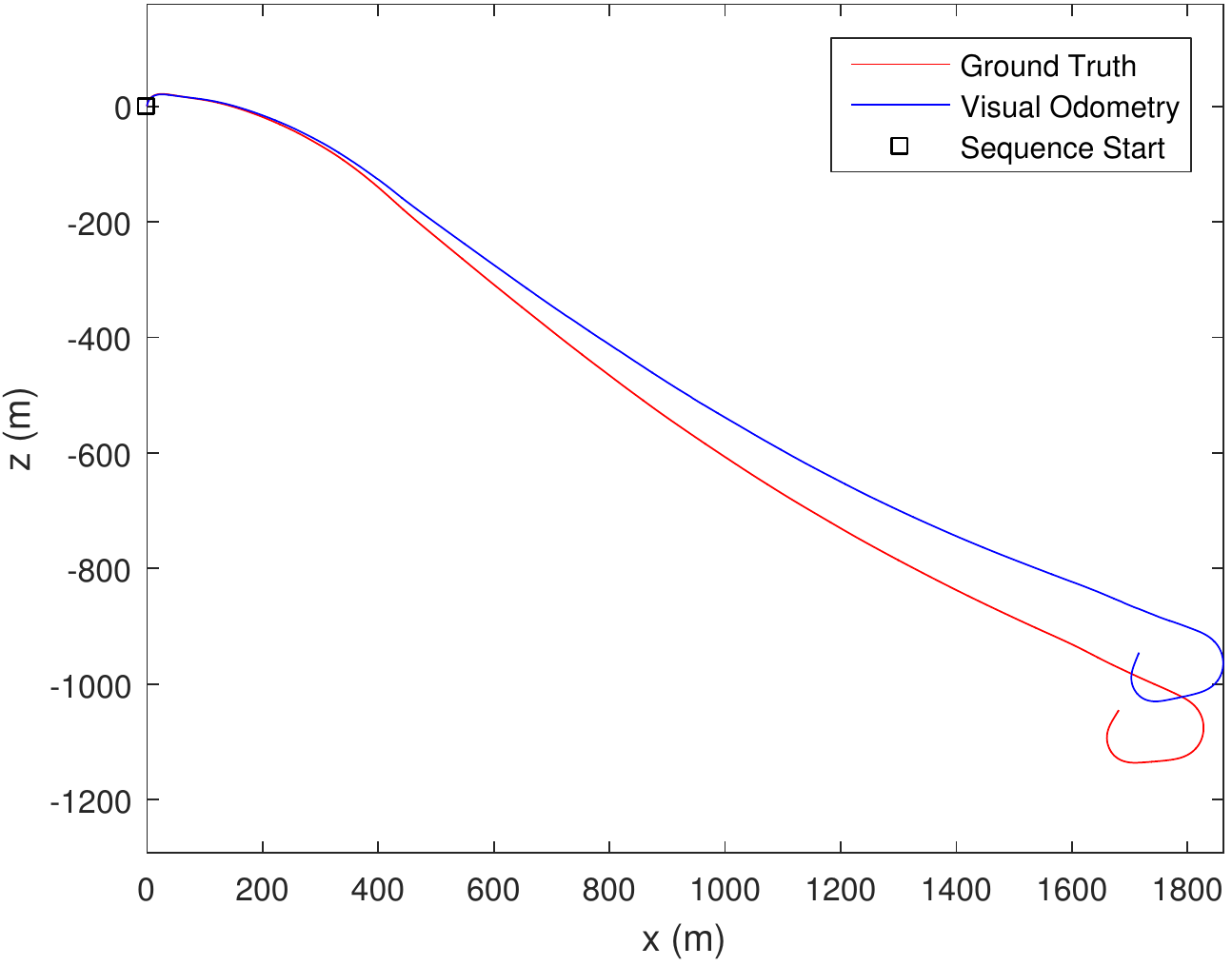}}
	{\includegraphics[width=0.32\textwidth]{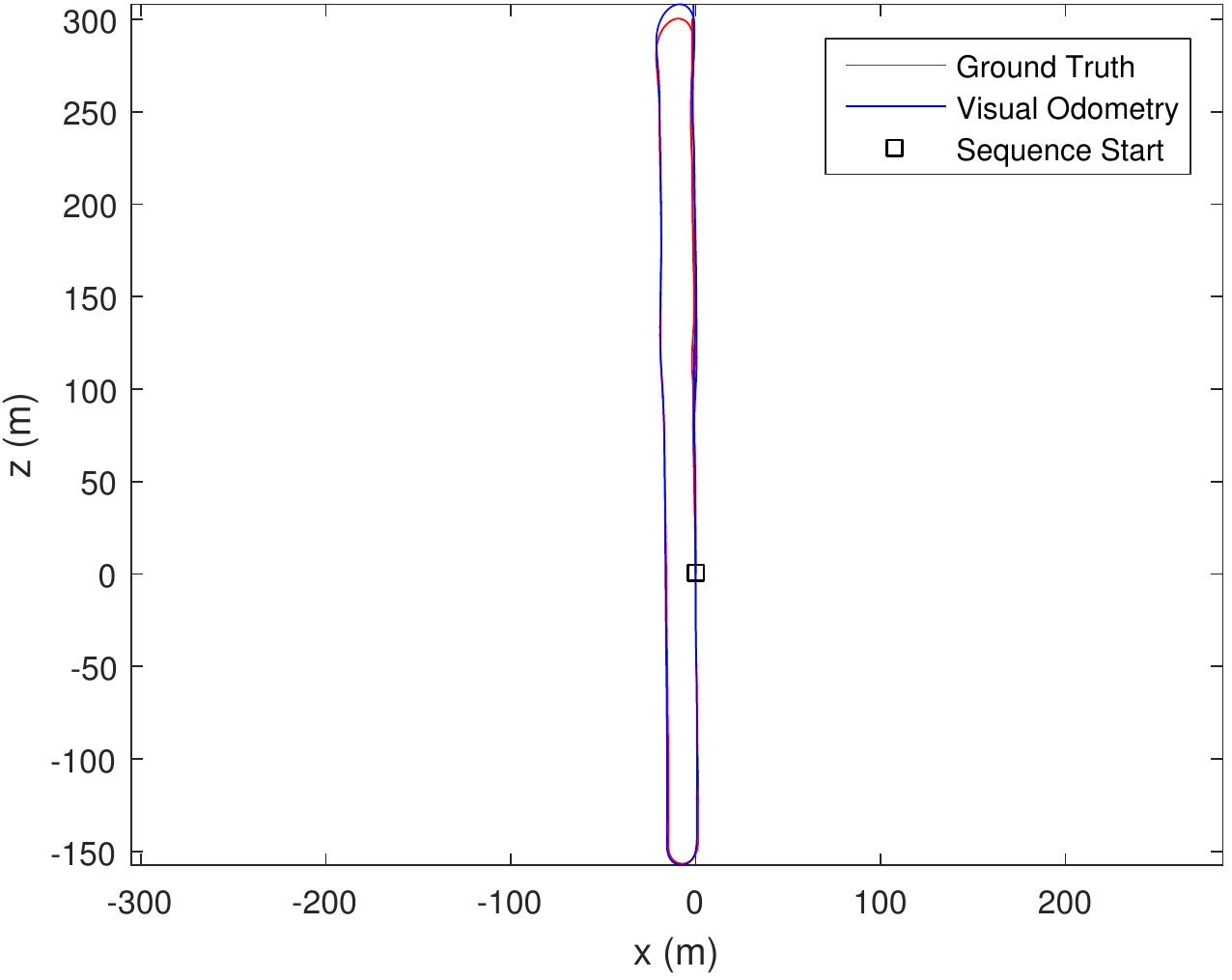}}
	\caption{\label{fig:estimated_pose} Comparison of the estimated motion trajectory and the ground truth motion. From left to right: (a) sequence 00, (b) sequence 01, (c) sequence 06. }
\end{figure*}

\subsection{Aerial Vehicle}
Since our visual odometry method does not rely on the restrictive motion model of the vehicle, we can easily apply our proposed method with slight modification to aerial vehicles (\emph{e.g.} UAV). The difference with a ground-based vehicle is that the camera height is not assumed constant but is updated for each motion since the aerial vehicle can change its height arbitrarily. Another challenge of aerial visual odometry is that it can rotate its yaw with no translation, which makes the pose estimation and 3D scene reconstruction highly under-constraint and error-prone. We also incorporated such motion in the video sequences we used in our experiment. 

We captured $500$ frames of video from a quadcopter flying among some trees near a road, where the scene has highly repetitive, unstructured and dynamic objects (\emph{e.g.} leaves, cars). Due to the lack of ground truth, unlike the KITTI dataset, we evaluate the scale drift by reversing the frames and appended them to the end of the video, where the last frame coincides with the first frame. Fig.~\ref{fig:aerial_exp} shows the result of our experiment. 

\begin{figure*}[th]
\centering
	{\includegraphics[width=0.32\textwidth]{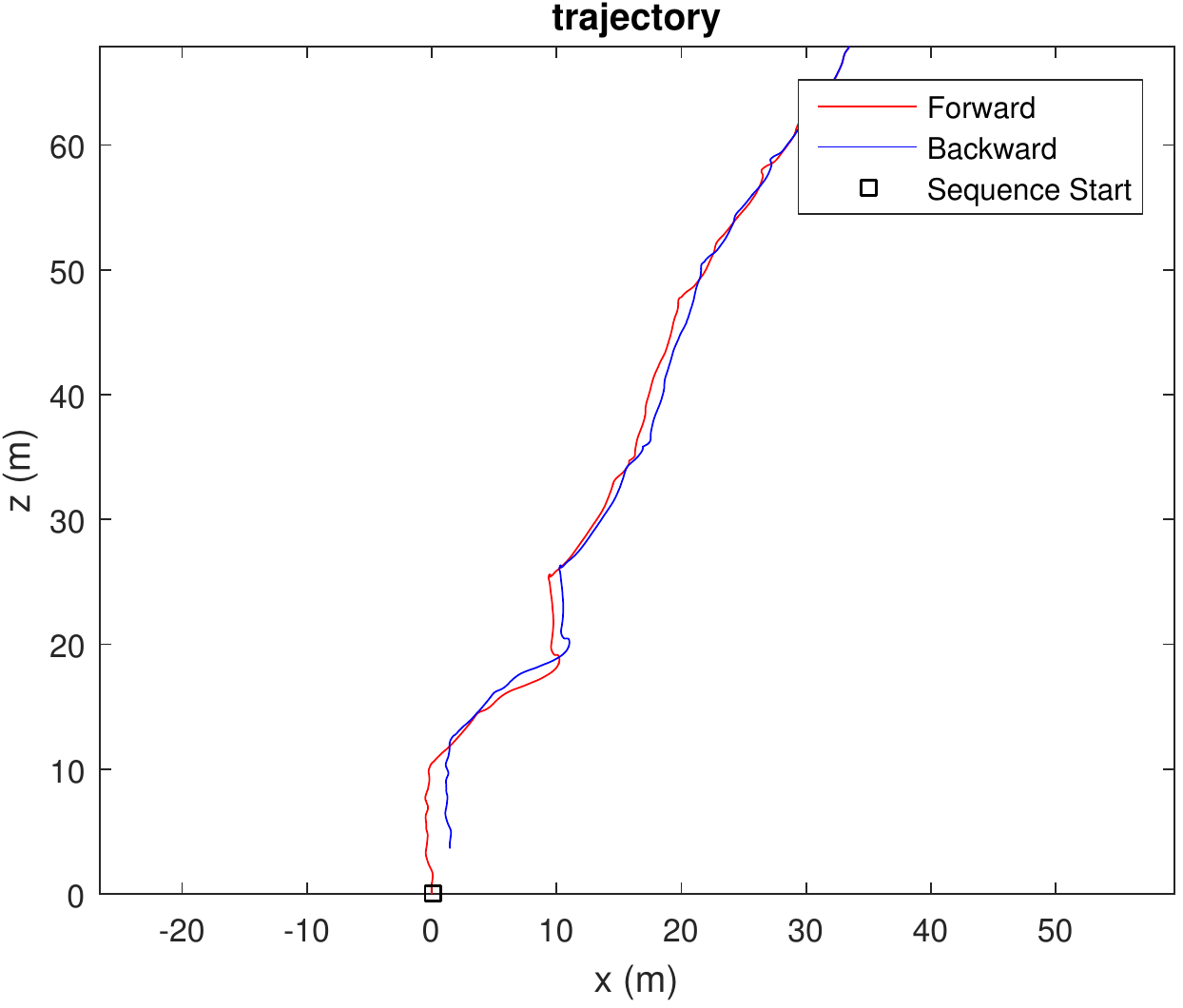}}
	{\includegraphics[width=0.325\textwidth]{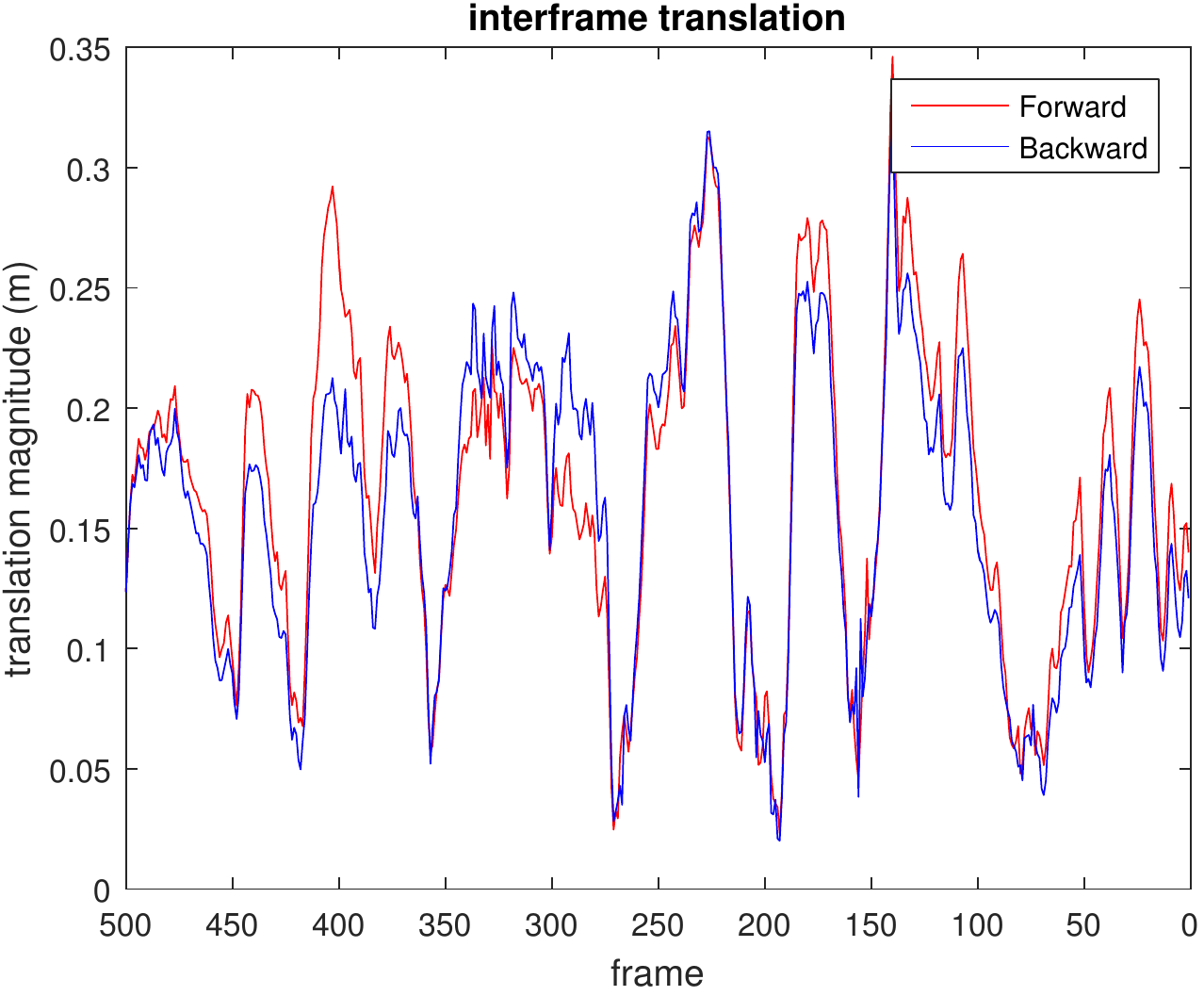}}
	{\includegraphics[width=0.315\textwidth]{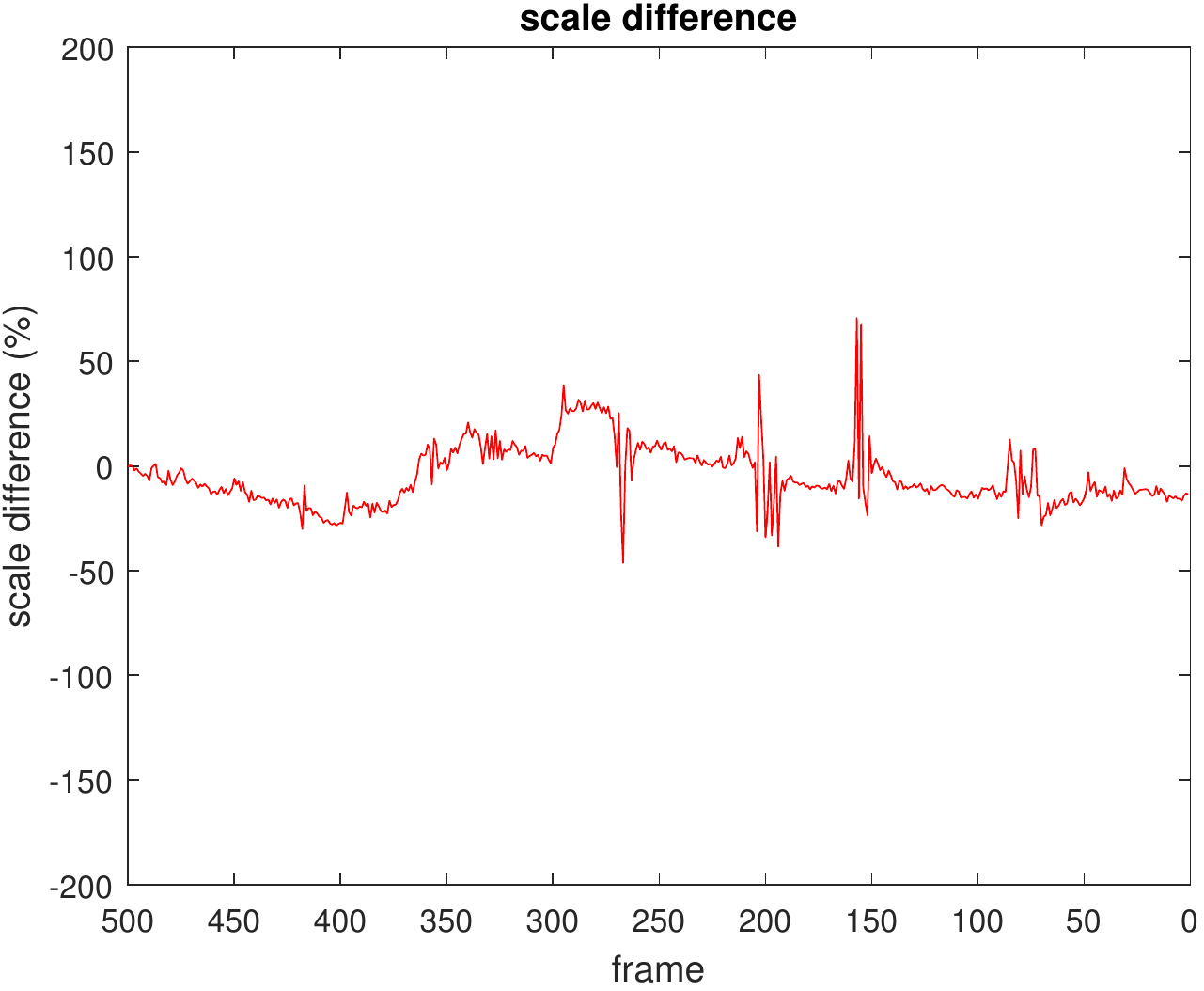}}
	\caption{\label{fig:aerial_exp} Plots evaluating the scale drift of our proposed visual odometry on UAV video. From left to right: (a) estimated motion trajectory, (b) inter-frame translation magnitude, (c) percentage scale difference (difference between the translation magnitude divided by forward magnitude). }
\end{figure*}

\begin{remark}
We do not compute the loop closure constraint for this video sequence. The result shown in Fig.~\ref{fig:aerial_exp} is pure visual odometry. 
\end{remark}

From the third plot of Fig.~\ref{fig:aerial_exp}, we can see that the translation scale difference remains close to zero, which shows that the scale drift is small. We also observed sudden spikes in the third plot, which corresponds to small motion, as can be seen from the central plot of Fig.~\ref{fig:aerial_exp}. As a comparison, we also evaluated VISO2-M~\cite{Geiger11} method on the same UAV video, using the constant camera height assumption. Fig.~\ref{fig:aerial_exp_libviso} shows the result. We observed that the estimated pose has a considerable translational magnitude (wrong) when the quadcopter rotates the yaw with negligible translational motion (\emph{e.g.} at frames 200, 150 and 100). From the third plot of Fig.~\ref{fig:aerial_exp_libviso}, we can also see that although the estimated scale does not drift (due to fixed camera height assumption), the estimated translational magnitude fluctuates erratically throughout the video sequence. 

\begin{figure*}[th]
\centering
    {\includegraphics[width=0.32\textwidth]{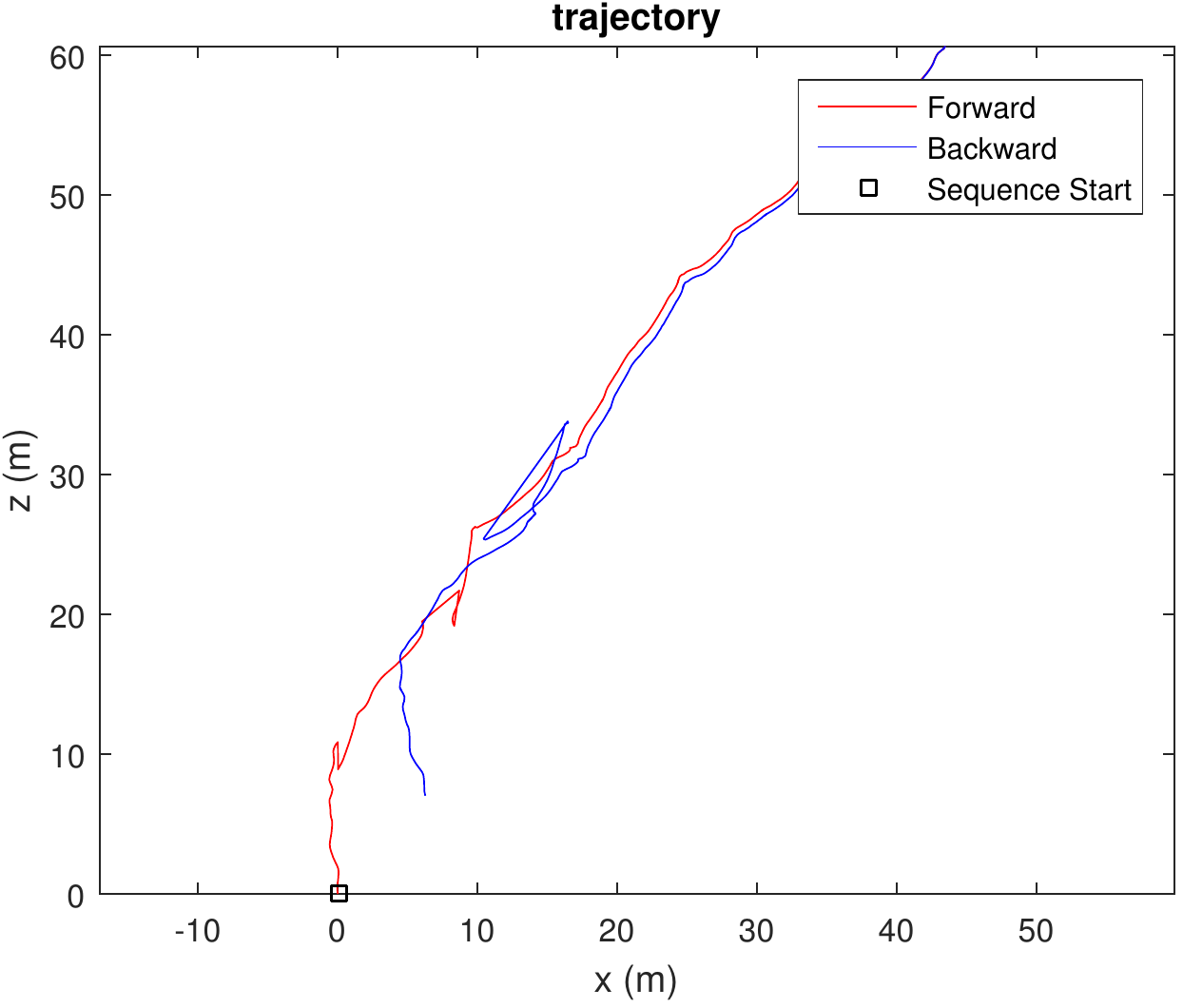}}
    {\includegraphics[width=0.325\textwidth]{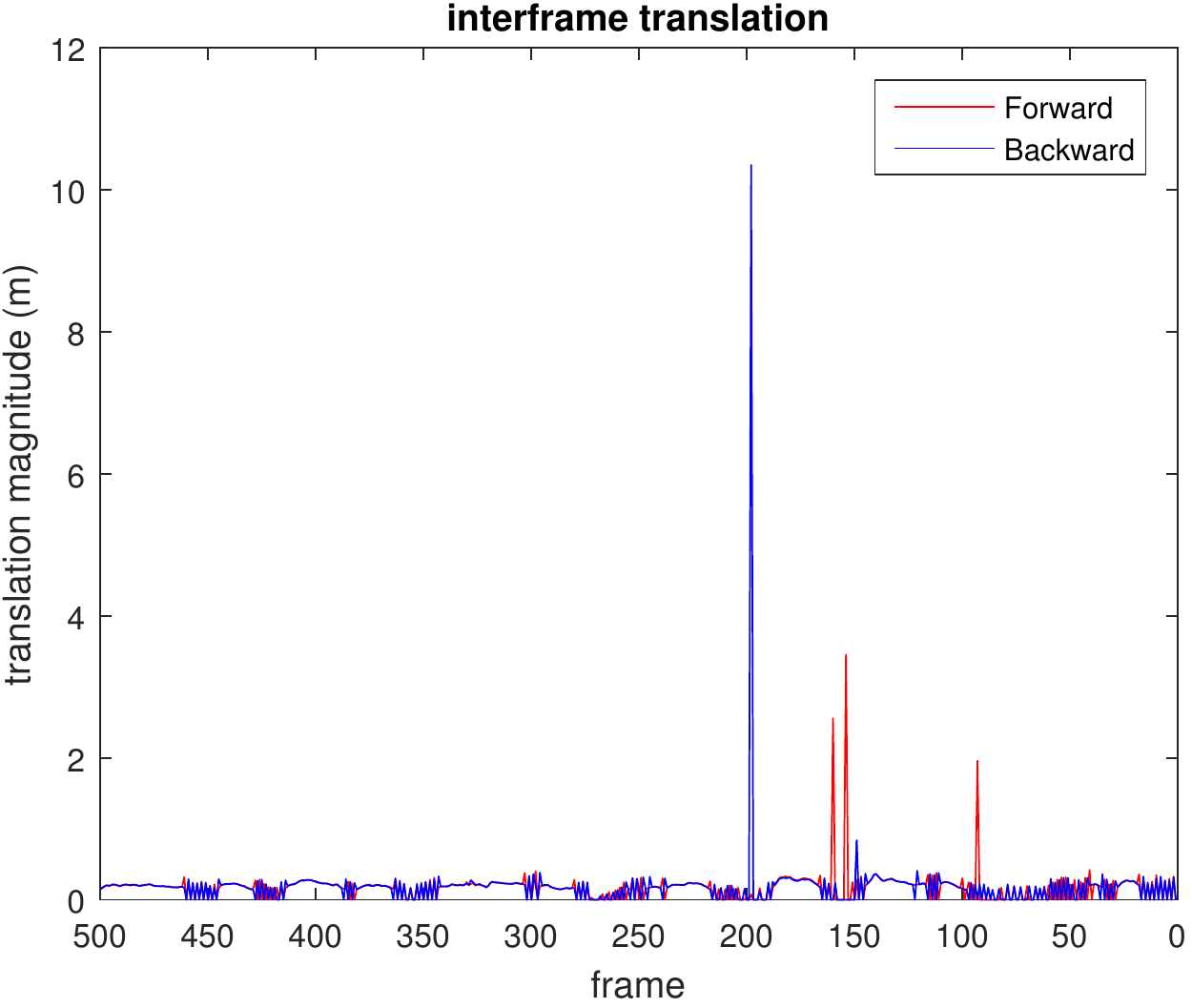}}
    {\includegraphics[width=0.315\textwidth]{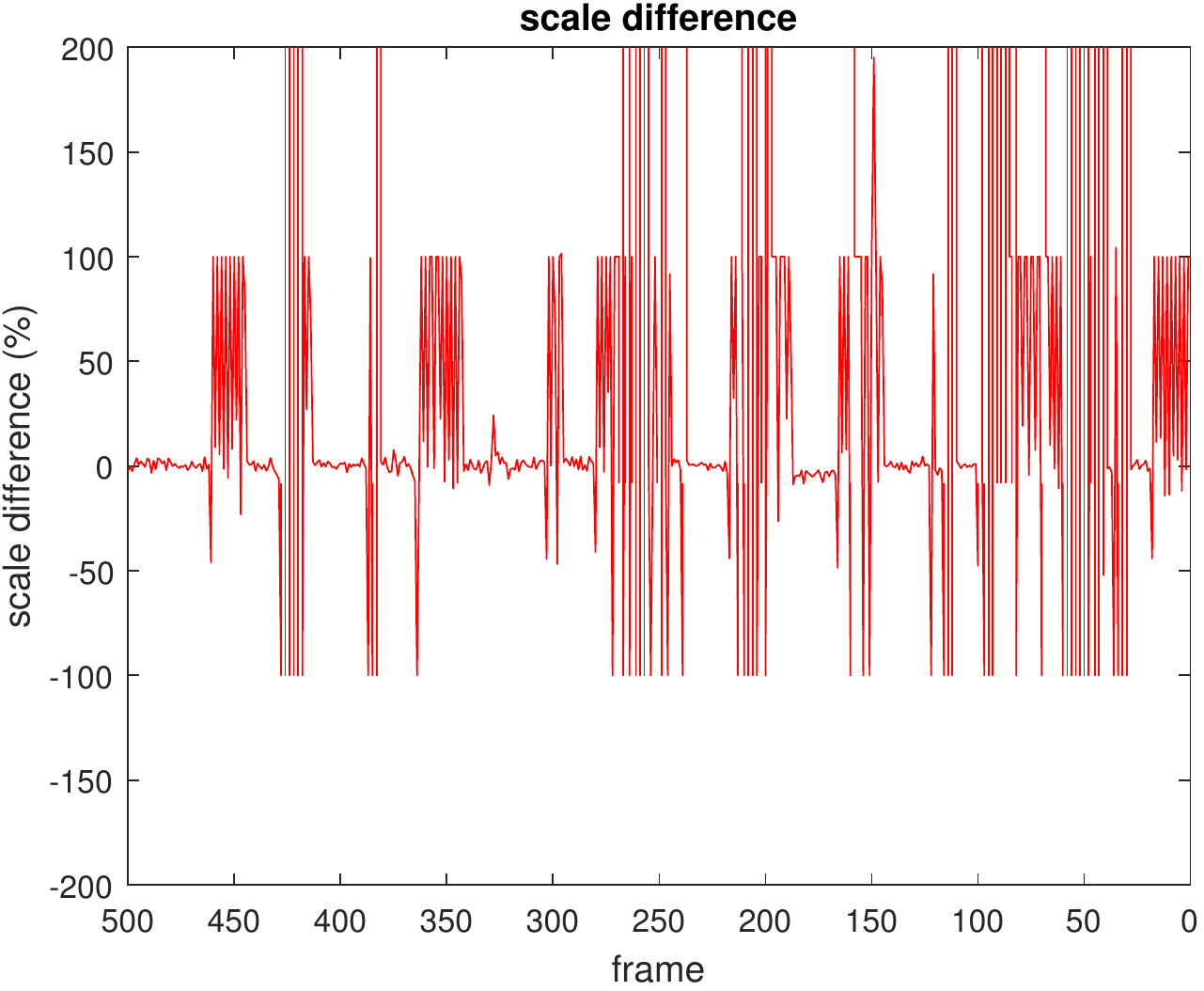}}
    \caption{\label{fig:aerial_exp_libviso} Plots evaluating the scale drift of our proposed visual odometry on UAV video. From left to right: (a) estimated motion trajectory, (b) inter-frame translation magnitude, (c) percentage scale difference (difference between the translation magnitude divided by forward magnitude). }
\end{figure*}

For the next experiment with UAV video, we captured 563 frames of a UAV flying at high speed with significant variation in height. We have also marked some trees with yellow tapes ($1 m$ apart) to calibrate the first translational scale and obtain a measure of scale drift after the UAV returns to the same spot. The error in the estimated position can also be visually observed by comparing the location of the reconstructed scene points. We plot 3D scene points with a depth standard deviation less than $0.2m$ for the first and last frames. Fig.~\ref{fig:aerial_fast_depth} and Fig.~\ref{fig:aerial_fast} shows our result. From Fig.~\ref{fig:aerial_fast}(c), we can see that the error of the estimated camera pose and reconstructed 3D scene points is minimal. The scale drift computed from the known distance between the tape is $+5.36 \%$. We have also calculated the distance between the farthest point from the starting location, compared to GPS measurement and VISO2-M result. The result in Table~\ref{tab:farthest} shows that our method agrees with GPS measurement more closely compared to the VISO2-M method. Thus, this verifies that our method can accurately estimate the camera motion, regardless of the motion dynamics of the vehicle or scene structure. 

\begin{figure*}[th]
\centering
	{\includegraphics[width=0.302\textwidth]{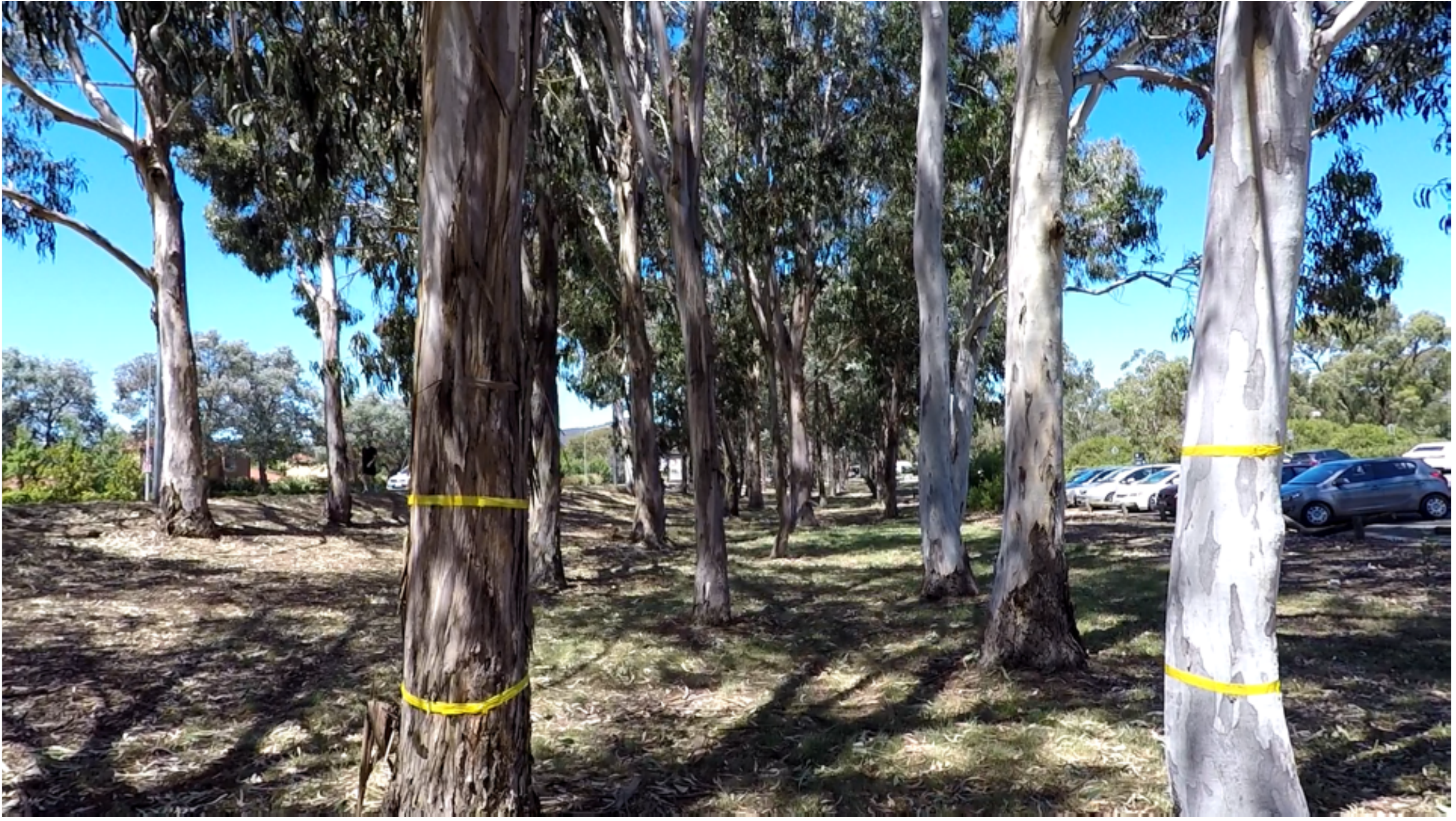}}
	{\includegraphics[width=0.32\textwidth]{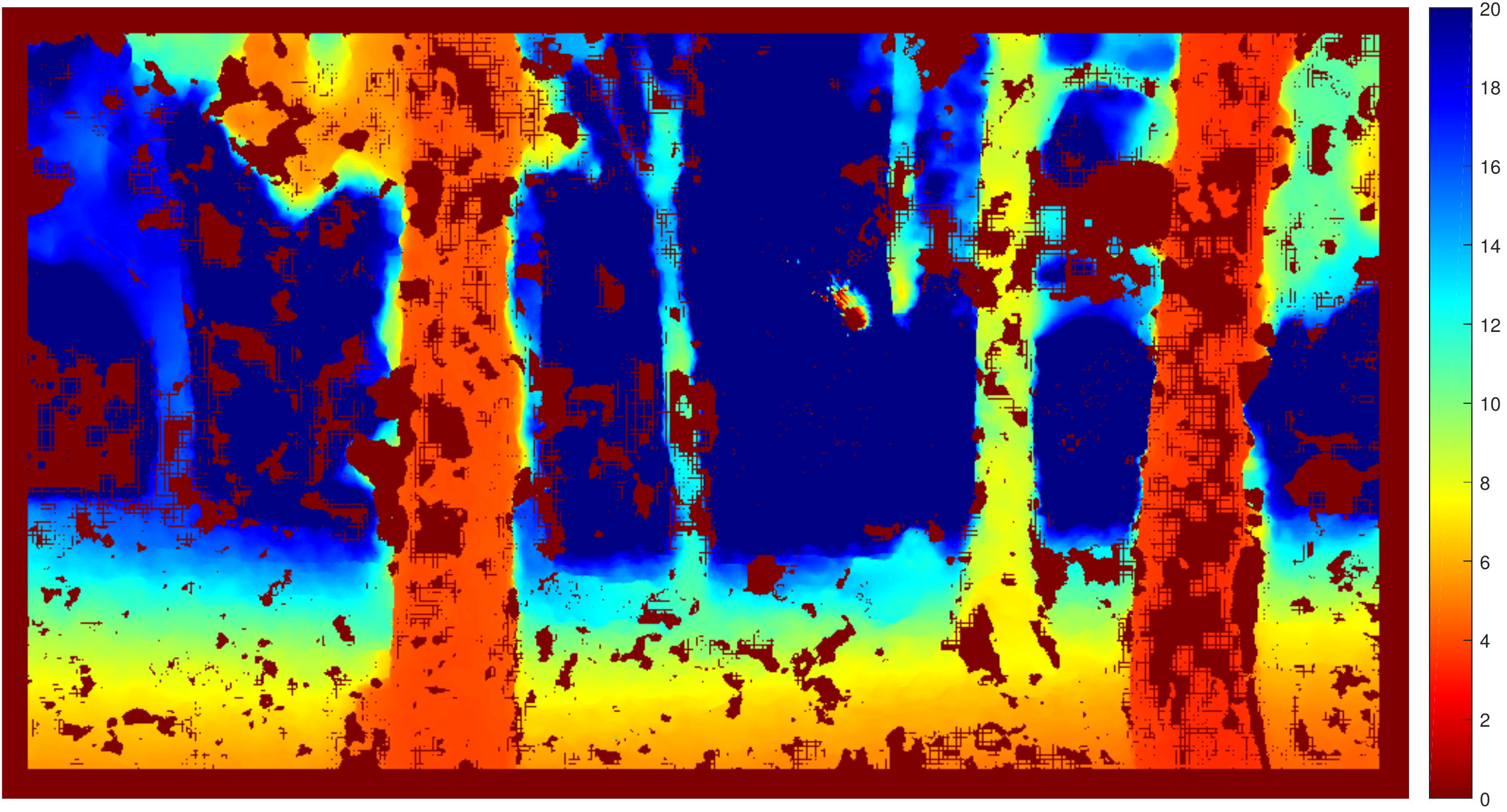}}
	{\includegraphics[width=0.32\textwidth]{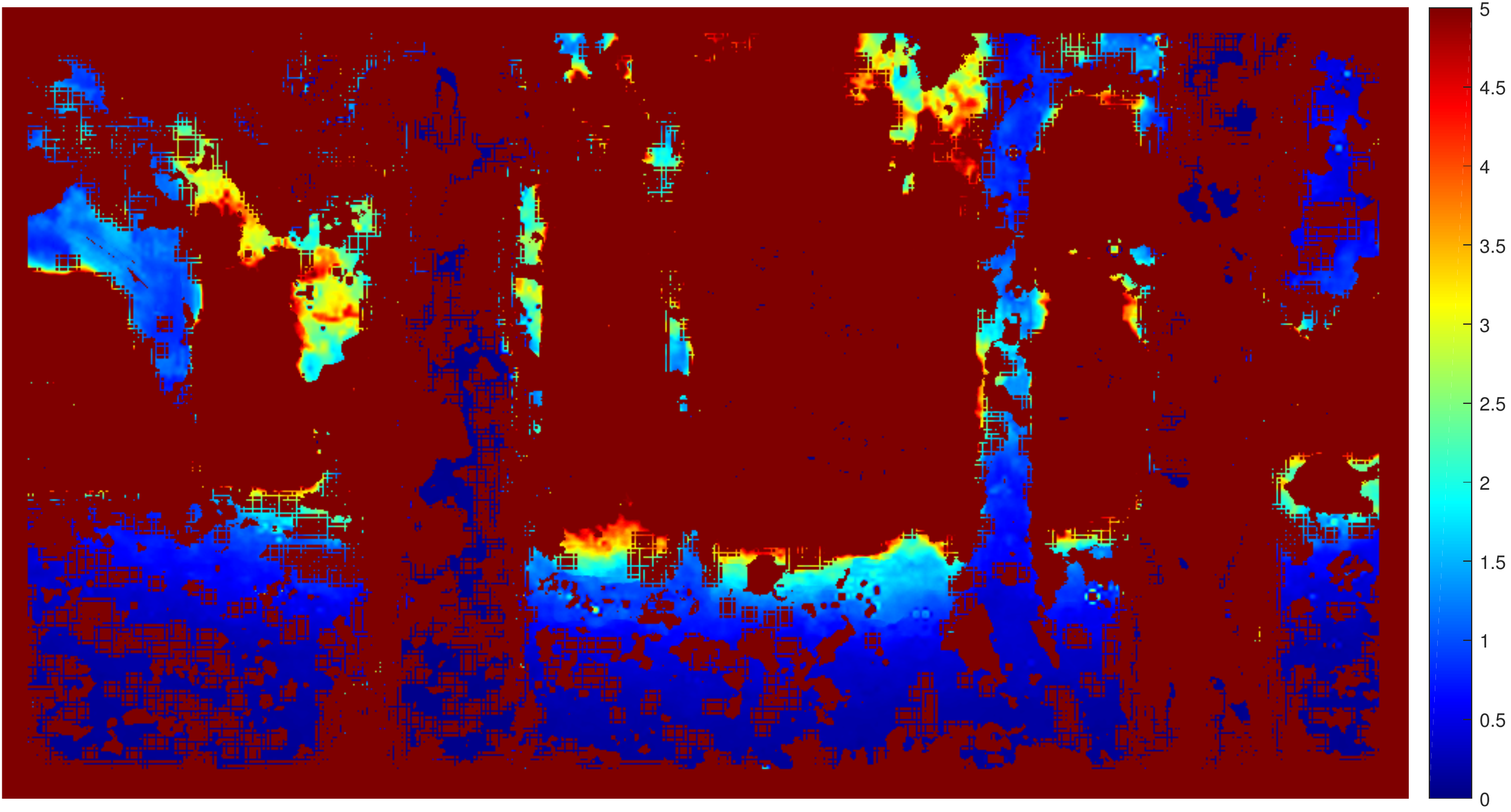}}
	{\includegraphics[width=0.302\textwidth]{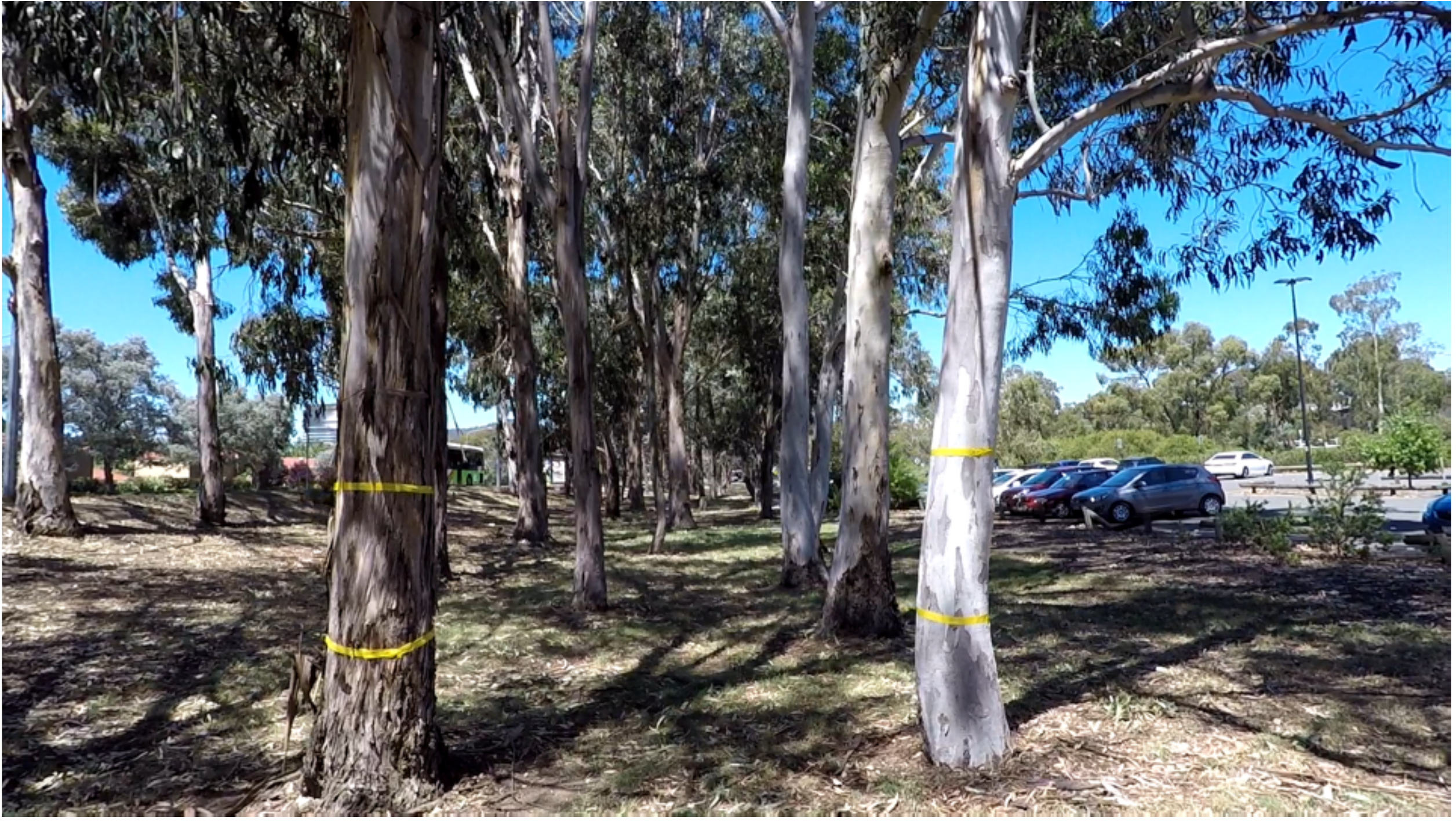}}
	{\includegraphics[width=0.32\textwidth]{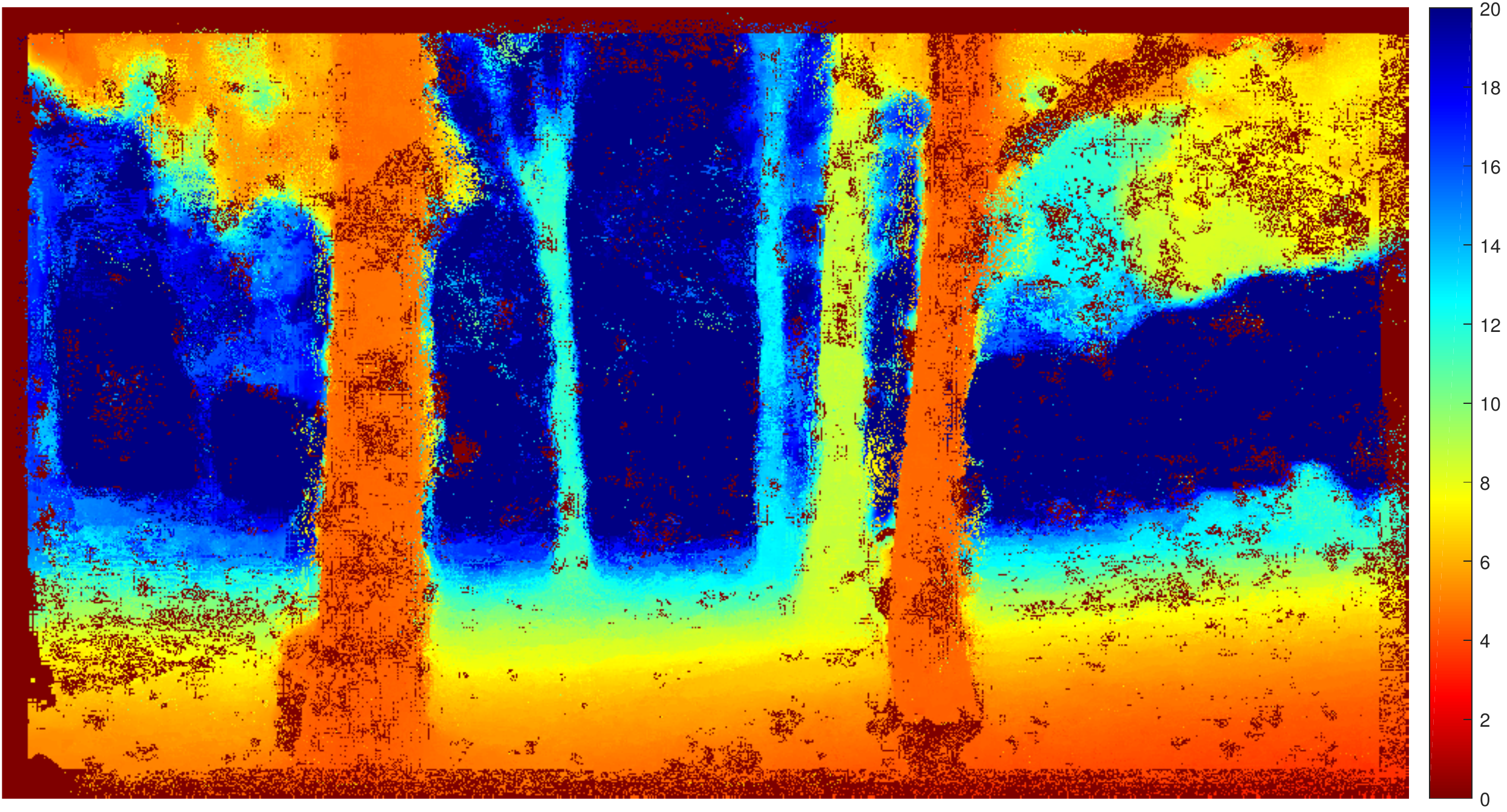}}
	{\includegraphics[width=0.32\textwidth]{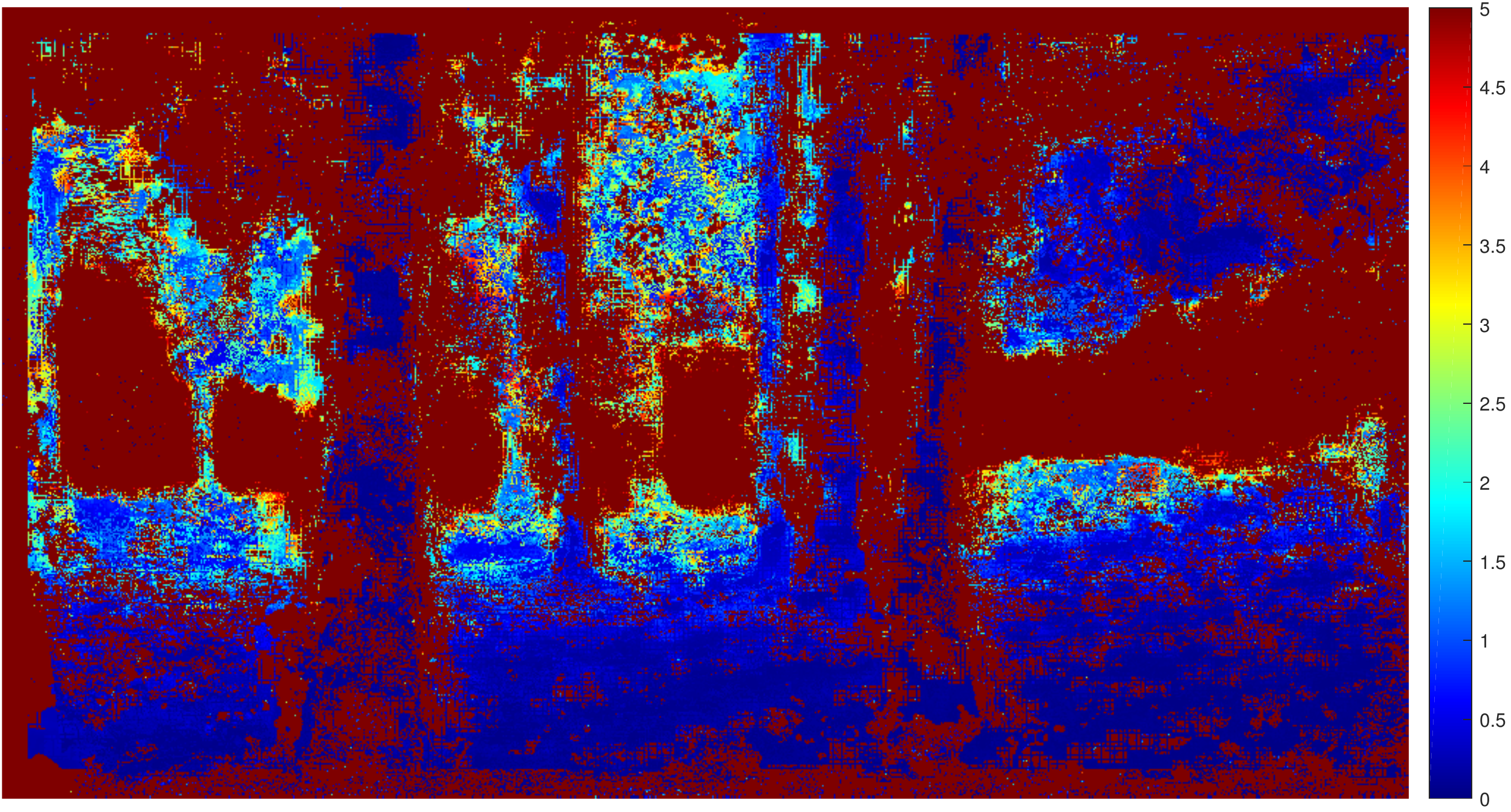}}
	\caption{\label{fig:aerial_fast_depth} Estimated depth with standard deviation. From left to right: input frame, estimated depth, estimated depth standard deviation. The first row is frame 0, second row is frame 562. The scale of the colour code is in meters. Pixels that are identified as outliers are not triangulated and appears dark red in the middle plot. }
\end{figure*}

\begin{figure*}[th]
\centering
	{\includegraphics[width=0.32\textwidth]{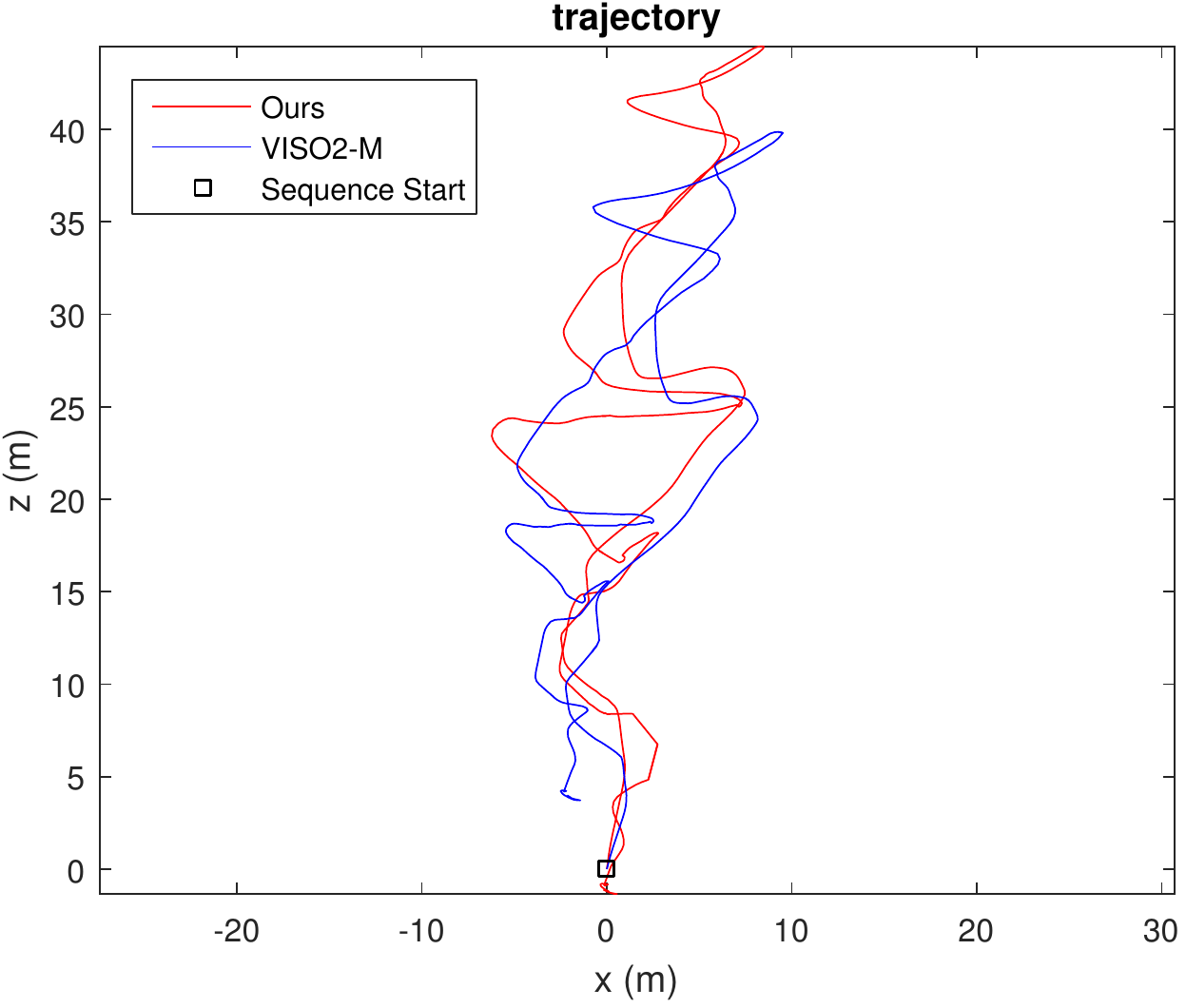}} 
	{\includegraphics[width=0.32\textwidth]{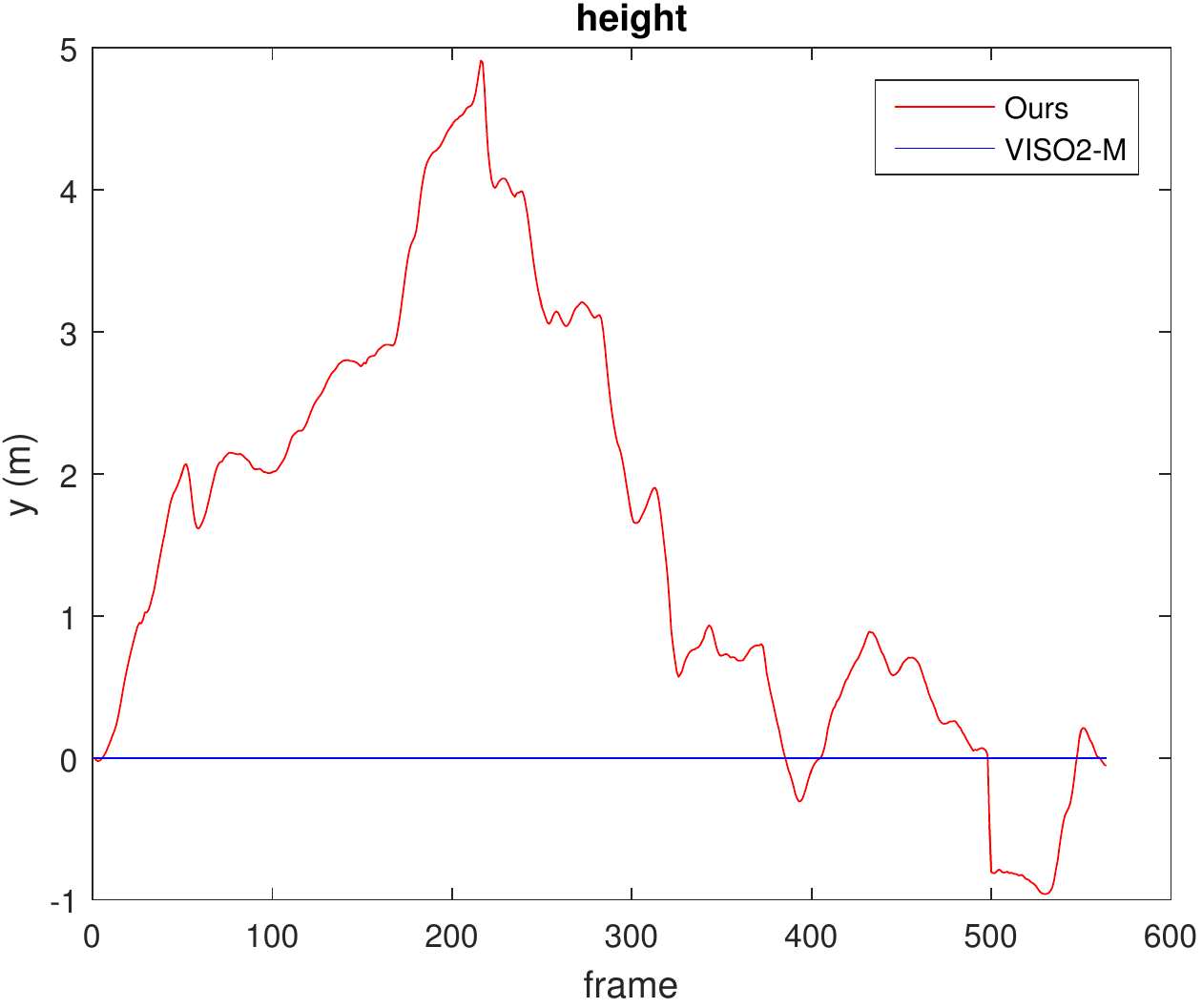}}
	{\includegraphics[width=0.325\textwidth]{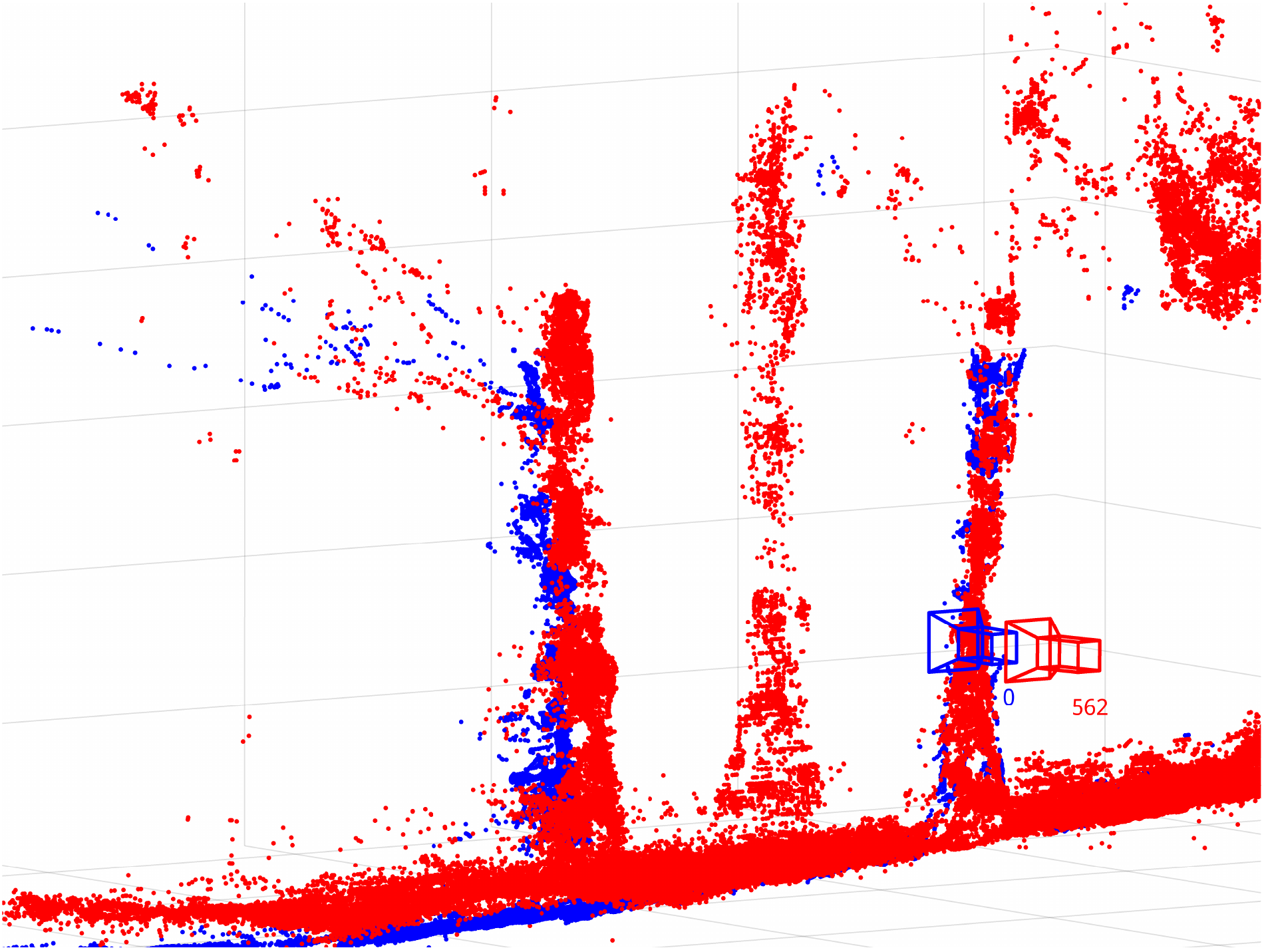}}
	\caption{\label{fig:aerial_fast} Fast moving UAV video result. From left to right: (a) the estimated trajectory, (b) estimated UAV height (zero at starting height, and positive is downwards), (c) our 3D reconstruction result of the first frame (blue) and last frame (red). }
\end{figure*}

\begin{table}
\centering
\begin{tabular}{|c|c|}
\hline
Method & Distance of farthest point to origin (m) \\
\hline
GPS & $45.81$ \\ 
Ours & $45.50$ \\ 
VISO2-M & $40.93$ \\
\hline
\end{tabular}
\vspace{0.1cm}
\caption{\label{tab:farthest}Comparison of estimated distance of the farthest point from origin. }
\end{table}

\section{Conclusion}\label{sec:Conclusion}
This paper presented a new Mahalanobis eight-point algorithm using the dense optical flow. The full uncertainty of the optical flow was estimated in a principled manner by using the negative logarithm of a bivariate Gaussian distribution and fitting to the matching cost. The weighted eight-point algorithm optimized the Mahalanobis distance of each pixel correspondence to obtain a robust inter-frame motion estimate. With the SLAM pipeline of the front-end and back-end modules, the performance of the proposed method was evaluated using real datasets, demonstrating improved performance on a UAV platform compared to other state-of-the-art methods regarding accuracy and robustness. Future work includes the use of high-speed dense optical flow methods proposed as in \cite{Adarve16} for the real-time processing and collision avoidance in cluttered environment.

\balance

\end{document}